\documentclass{article}

\usepackage{microtype}
\usepackage{graphicx}
\usepackage{booktabs} 
\usepackage{multirow}
\usepackage{array}
\usepackage{float} 
\usepackage{xurl}  
\newcommand{\Rad}{\mathcal{R}}

\usepackage{hyperref}

\usepackage[preprint]{icml2026}

\hypersetup{
  pdfsubject={},
  pdfkeywords={Machine Learning, Symbolic Regression}
}
\usepackage{booktabs}
\usepackage{siunitx}
\usepackage[table]{xcolor}
\usepackage{amsmath}
\usepackage{amssymb}
\usepackage{mathtools}
\usepackage{amsthm}
\usepackage{booktabs} 
\usepackage{tabularx}  
\usepackage{makecell}  
\usepackage[most]{tcolorbox}
\usepackage{listings}
\usepackage{enumitem}

\newtcolorbox{promptbox}[2][]{%
    breakable,  
    colback=gray!5!white,      
    colframe=gray!75!black,    
    fonttitle=\bfseries,       
    title={#2},                
    enhanced,                  
    attach boxed title to top left={yshift=-2mm, xshift=2mm},
    boxed title style={sharp corners, colback=gray!75!black},
    before upper=\raggedright,
    fontupper=\small\ttfamily, 
    #1
}

\usepackage[capitalize,noabbrev]{cleveref}

\theoremstyle{plain}
\newtheorem{theorem}{Theorem}[section]
\newtheorem{proposition}[theorem]{Proposition}
\newtheorem{lemma}[theorem]{Lemma}
\newtheorem{corollary}[theorem]{Corollary}
\theoremstyle{definition}
\newtheorem{definition}[theorem]{Definition}
\newtheorem{assumption}[theorem]{Assumption}
\theoremstyle{remark}

\usepackage[textsize=tiny]{todonotes}

\icmltitlerunning{Prior-Guided Symbolic Regression}

\begin{document}

\twocolumn[
  \icmltitle{Prior-Guided Symbolic Regression: Towards Scientific Consistency in Equation Discovery}



  \icmlsetsymbol{corresponding}{*}

  \begin{icmlauthorlist}
    \icmlauthor{Jing Xiao}{sch}
    \icmlauthor{Xinhai Chen}{sch}
    \icmlauthor{Jiaming Peng}{sch}
    \icmlauthor{Qinglin Wang}{sch}
    \icmlauthor{Menghan Jia}{sch}
    \icmlauthor{Zhiquan Lai}{sch}
    \icmlauthor{Guangping Yu}{sch}
    \icmlauthor{Dongsheng Li}{sch}
    \icmlauthor{Tiejun Li}{sch}
    \icmlauthor{Jie Liu}{sch}
  \end{icmlauthorlist}

  \icmlaffiliation{sch}{College of Computer Science and Technology, National University of Defense Technology, Changsha 410073, China}

  \icmlcorrespondingauthor{Xinhai Chen}{chenxinhai16@nudt.edu.cn}

  \icmlkeywords{Machine Learning, Symbolic Regression}

  \vskip 0.3in
]



\NoHyper\printAffiliationsAndNotice{}\endNoHyper  
\begin{abstract}
Symbolic Regression (SR) aims to discover interpretable equations from observational data, with the potential to reveal underlying principles behind natural phenomena.
However, existing approaches often fall into the Pseudo-Equation Trap: producing equations that fit observations well but remain inconsistent with fundamental scientific principles. A key reason is that these approaches are dominated by empirical risk minimization, lacking explicit constraints to ensure scientific consistency.
To bridge this gap, we propose PG-SR, a prior-guided SR framework built upon a three-stage pipeline consisting of warm-up, evolution, and refinement. Throughout the pipeline, PG-SR introduces a prior constraint checker that explicitly encodes domain priors as executable constraint programs, and employs a Prior Annealing Constrained Evaluation (PACE) mechanism during the evolution stage to progressively steer discovery toward scientifically consistent regions.
Theoretically, we prove that PG-SR reduces the Rademacher complexity of the hypothesis space, yielding tighter generalization bounds and establishing a guarantee against pseudo-equations. Experimentally, PG-SR outperforms state-of-the-art baselines across diverse domains, maintaining robustness to varying prior quality, noisy data, and data scarcity.
\end{abstract}

\section{Introduction}
\begin{figure}[t]
    \centering
    \includegraphics[width=0.48\textwidth]{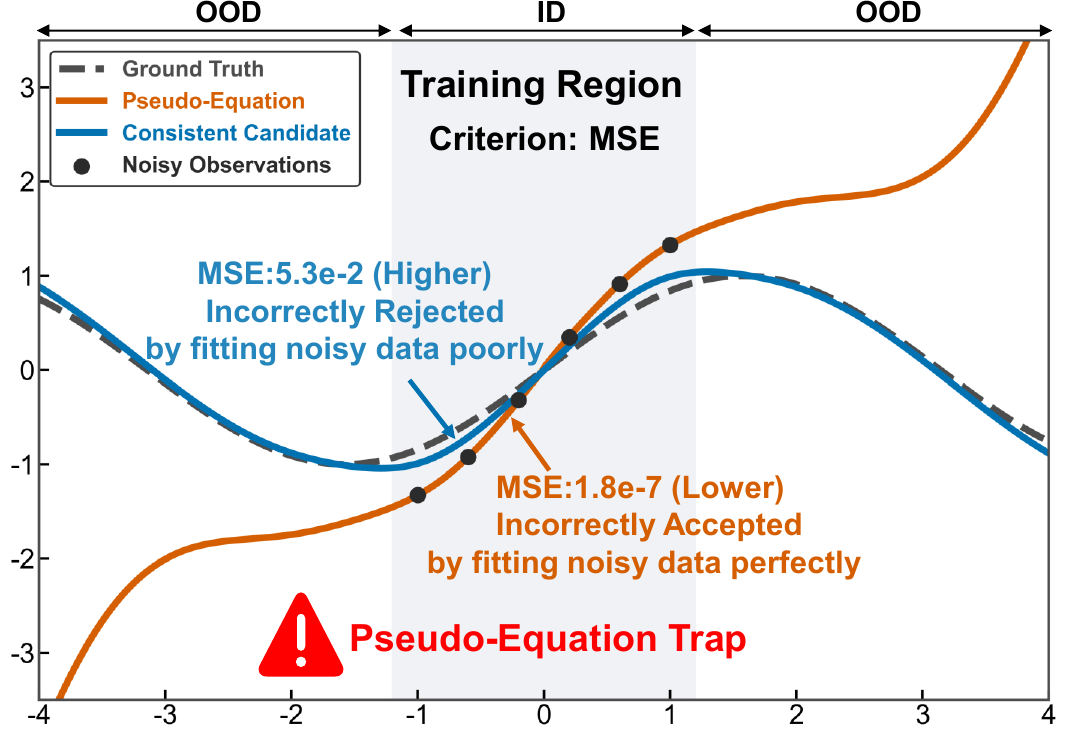}
\caption{The Pseudo-Equation Trap. The pseudo-equation (orange) is incorrectly accepted as it fits noisy training data perfectly, despite deviating from the true underlying equation. In contrast, the consistent candidate (blue) is incorrectly rejected because it fits noise poorly, even though it aligns with the true equation.}
    \label{FIG:1}
\end{figure} 

Symbolic Regression (SR) aims to extract interpretable mathematical equations from observational data~\cite{schmidt2009distilling}, revealing underlying principles behind natural phenomena and facilitating scientific discovery~\cite{wang2019symbolic,udrescu2020ai,makke2024interpretable}.
Unlike black-box models driven solely by predictive accuracy~\cite{de2024ai}, SR prioritizes interpretability and consistency with fundamental scientific principles, enabling reliable extrapolation beyond training distribution rather than merely fitting data~\cite{cranmer2020discovering}.

However, existing SR approaches often fall into the Pseudo-Equation Trap: producing equations that fit observations well but remain inconsistent with fundamental scientific principles. This problem is further worsened in real-world scenarios, where observations are often noisy and scarce, making pseudo-equations more likely to be produced and leading to catastrophic divergence under out-of-distribution (OOD) conditions~\cite{karniadakis2021physics}. As illustrated in Figure~\ref{FIG:1}, scientifically consistent candidates can be incorrectly rejected due to higher errors caused by noise, while structurally incorrect pseudo-equations are accepted.

Current approaches struggle to escape this trap because their optimization objectives are dominated by empirical risk minimization~\cite{arjovsky2019invariant}, lacking explicit constraints to ensure scientific consistency and thereby degenerating equation discovery into empirical fitting rather than principled scientific modeling.
Specifically, search-based approaches~\cite{mccormick2019gplearn,burlacu2020operon,virgolin2021improving,cranmer2023interpretable,petersen2019deep,landajuela2022unified}
impose only implicit constraints through operator design and complexity heuristics, and therefore remain dominated by empirical risk minimization during the search.
Transformer-based approaches~\cite{biggio2021neural,kamienny2022end,shojaee2023transformer,ying2025neural}
rely on pre-training on synthetic data to induce inductive biases toward scientific consistency, which function as  constraints; however, these constraints collapse under distributional shift, degenerating the discovery process into merely empirical fitting.
LLM-based approaches~\cite{grayeli2024symbolic,shojaee2024llm,wang2025drsr}
use LLMs as surrogates to impose scientific consistency constraints, but this guidance is prompt-based and thus remains implicit, making it vulnerable to hallucinations~\cite{ji2023survey}, leading to pseudo-equations. Therefore, when empirical risk minimization governs the discovery, hallucinated pseudo-equations can emerge in early stages and be reused as context in subsequent iterations, leading to their progressive amplification across generations.

To address these challenges, we propose PG-SR, a prior-guided SR framework built upon a three-stage pipeline consisting of warm-up, evolution, and refinement. Throughout the pipeline, PG-SR introduces a prior constraint checker that
explicitly encodes domain priors as executable
constraint programs, and employs a Prior-Annealing Constrained Evaluation (PACE) mechanism during the evolution stage to progressively steer discovery toward scientifically consistent regions. This approach mitigates the dominance of empirical risk minimization and safeguards against the Pseudo-Equation Trap.

Our main contributions are summarized as follows:
\begin{itemize}
\item \textbf{Problem Formulation:} We formalize the Pseudo-Equation Trap, which reveals the fundamental divergence that arises between empirical risk minimization and scientific consistency in equation discovery.
\item \textbf{Method:} We propose PG-SR, a prior-guided SR framework that explicitly incorporates constraints to ensure scientific consistency, mitigating the dominance of empirical risk minimization.
\item \textbf{Theory:} We show that PG-SR reduces the Rademacher complexity of the hypothesis space, yielding tighter generalization bounds and establishing a theoretical guarantee against pseudo-equations.
\item \textbf{Experiments:} Extensive experiments across different domains demonstrate that PG-SR outperforms existing approaches, maintaining robustness to varying prior quality, noisy data, and data scarcity.
\end{itemize}

\section{Related Work}
\subsection{Search-Based Symbolic Regression}
Search-based approaches formulate symbolic regression as a combinatorial optimization problem. Genetic programming approaches~\cite{koza1994genetic}, such as GPlearn~\cite{mccormick2019gplearn}, Operon~\cite{burlacu2020operon}, GP-GOMEA~\cite{virgolin2021improving}, and PySR~\cite{cranmer2023interpretable}, perform heuristic searches by evolving expression trees, while approaches like DSR~\cite{petersen2019deep} and uDSR~\cite{landajuela2022unified} employ reinforcement learning for sequential generation~\cite{uc2023survey}. Despite their effectiveness, these approaches impose only implicit constraints through operator design and complexity heuristics, allowing empirical risk minimization to dominate search process and leading to pseudo-equations.

\subsection{Transformer-Based Symbolic Regression}
Transformer-based approaches, including NeSymReS~\cite{biggio2021neural}, E2E~\cite{kamienny2022end}, and Symformer~\cite{vastl2024symformer}, utilize large-scale pretraining on synthetic datasets to enable end-to-end equation generation. TPSR~\cite{shojaee2023transformer} further integrates Monte Carlo Tree Search–based planning~\cite{kocsis2006bandit} into the decoding process. Building on this paradigm, PhyE2E~\cite{ying2025neural} incorporates second-order derivative decomposition and dimensional constraints to improve physical consistency. These pretraining strategies induce inductive biases toward scientific consistency from synthetic data, which function as constraints; however, these constraints are tied to the training distribution and collapse under distributional shift, allowing empirical risk minimization to dominate discovery and resulting in pseudo-equations.

\subsection{LLM-Based Symbolic Regression}
Recently, leveraging LLMs~\cite{brown2020language,achiam2023gpt,team2023gemini} for symbolic regression has emerged as a new direction. LASR~\cite{grayeli2024symbolic} employs an LLM-extracted semantic concept library to guide discovery; LLM-SR~\cite{shojaee2024llm} and FunSearch~\cite{romera2024mathematical} exploit the code generation abilities of LLMs~\cite{chen2021evaluating} to construct equations; DrSR \cite{wang2025drsr} proposes a dual-reasoning mechanism that combines data-driven insights with reflective feedback to guide LLM generation. Although these methods use LLMs as surrogates to impose scientific consistency constraints, such guidance is prompt-based and remains implicit, rendering it vulnerable to hallucinations~\cite{ji2023survey} and leading to pseudo-equations. When empirical risk minimization dominates discovery, pseudo-equations can emerge in early stages, be reused as context, and propagate across generations, amplifying these hallucinations.

\begin{figure*}[t]
	\centering
		\includegraphics[width=0.99\textwidth,height=8.6cm]{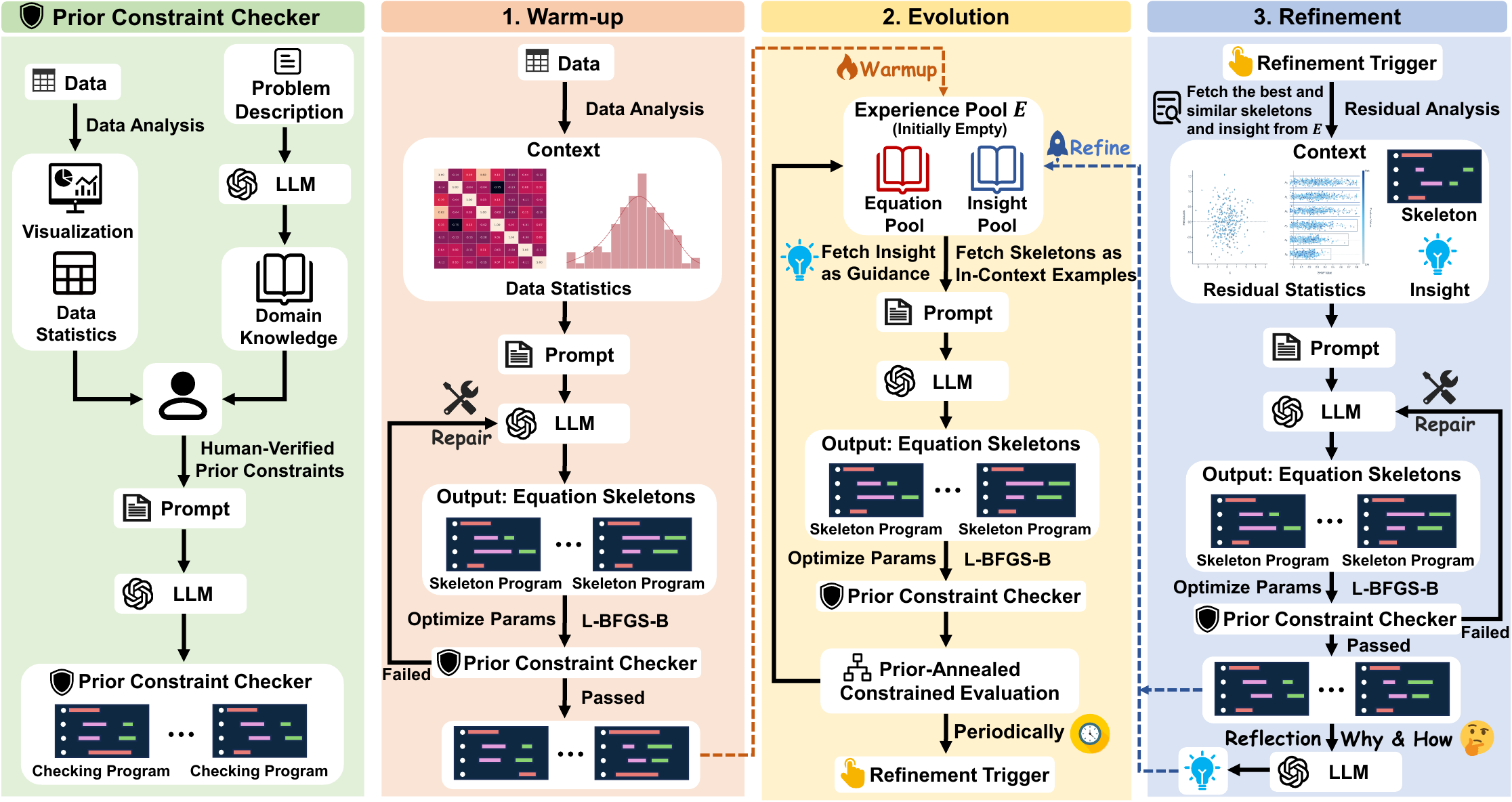}
\caption{Overview of the PG-SR framework with a three-stage pipeline comprising Warm-up, Evolution, and Refinement. The left panel illustrates the construction of the prior constraint checker.}
	\label{FIG:2}
\end{figure*} 
\section{Methodology}
\subsection{Problem Formulation}
Let $\mathcal{D} = \{ (\mathbf{x}_i, y_i) \}_{i=1}^N$ denote the dataset. The objective is to identify a function $f^* \in \mathcal{H}$, where $\mathcal{H}$ denotes the hypothesis space, that maximizes a score based on the Mean Squared Error (MSE), denoted by $S_{\text{MSE}}(f,\mathcal{D})$, defined as
\begingroup
\small
\begin{equation}
S_{\text{MSE}}(f,\mathcal{D})
= -\text{MSE}(f, \mathcal{D})
= - \frac{1}{N} \sum_{i=1}^N \big( y_i - f(\mathbf{x}_i) \big)^2,
\end{equation}
\endgroup
\begin{equation}
f^* = \arg\max_{f \in \mathcal{H}} S_{\text{MSE}}(f,\mathcal{D}).
\end{equation}
To incorporate prior knowledge, let $\mathcal{C}$ denote a set of scientific prior constraints.
We then define a boolean prior constraint checker $V$ as
\begin{equation}
V(f, \mathcal{C}) =
\begin{cases}
1, & \text{if $f$ satisfies all constraints in $\mathcal{C}$},\\
0, & \text{otherwise}.
\end{cases}
\end{equation}

To jointly account for numerical accuracy and prior constraints, we define a total score for each candidate function:
\begin{equation}
S(f, \mathcal{D}, \mathcal{C}) = S_{\text{MSE}}(f,\mathcal{D}) \cdot w(V(f, \mathcal{C})),
\label{eq4}
\end{equation}
where $w(V(f, \mathcal{C}))$ assigns greater emphasis to candidates satisfying the constraints while retaining contributions from those that violate some constraints to preserve diversity. The symbolic regression problem is thus reformulated as
\begin{equation}
f^* = \arg\max_{f \in \mathcal{H}} S(f, \mathcal{D}, \mathcal{C}).
\end{equation}

\subsection{Theoretical Analysis}
\label{sec:theory}
We formalize the PG-SR within the framework of statistical learning theory~\cite{mohri2018foundations}, define the Pseudo-Equation Trap, and demonstrate how incorporating explicit prior constraints can effectively mitigate it.

\begin{definition}[Symbolic Hypothesis Space]
Let $\mathcal{X} \subseteq \mathbb{R}^d$ and $\mathcal{Y} \subseteq \mathbb{R}$ be the input and output spaces. The hypothesis space $\mathcal{H}$ is defined as the set of all valid symbolic expression trees bounded by a maximum depth $L$.
\end{definition}

Standard symbolic regression seeks a function $f^* \in \mathcal{H}$ that minimizes the empirical risk $\hat{R}_N(f)$. However, as established in statistical learning theory~\cite{mohri2018foundations}, minimizing empirical risk does not guarantee minimizing the expected risk $R(f)$.

\begin{lemma}[Generalization Bound]
\label{lem:gen_bound}
Given the hypothesis space $\mathcal{H}$, assume the loss function is $\lambda$-Lipschitz and bounded by $M$. For any $\delta > 0$, with probability at least $1-\delta$, the expected risk $R(f)$ satisfies:
\begin{equation}
    R(f) \le \hat{R}_N(f) + 2\lambda\mathcal{R}_N(\mathcal{H}) + M\sqrt{\frac{\log(1/\delta)}{2N}}
\end{equation}
where $\mathcal{R}_N(\mathcal{H})$ denotes the Rademacher complexity of $\mathcal{H}$. (See Appendix~\ref{sec:proof_thm_3_2} for the detailed formulation.)
\end{lemma}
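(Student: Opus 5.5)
The plan is the standard symmetrization–McDiarmid argument of \cite{mohri2018foundations}, applied to the loss class $\mathcal{G} = \{(\mathbf{x},y)\mapsto \ell(f(\mathbf{x}),y) : f\in\mathcal{H}\}$. We assume the samples in $\mathcal{D}$ are drawn i.i.d.\ from an unknown distribution on $\mathcal{X}\times\mathcal{Y}$. We assume $\ell$ takes values in $[0,M]$ and is $\lambda$-Lipschitz in its first argument. We write $R(f)=\mathbb{E}[\ell(f(\mathbf{x}),y)]$ and $\hat{R}_N(f)=\frac1N\sum_i \ell(f(\mathbf{x}_i),y_i)$.

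The bound should hold uniformly over $f\in\mathcal{H}$, and that is what makes it usable for the data-dependent minimizer. So we study the supremum
\begin{equation*}
\Phi(\mathcal{D})=\sup_{f\in\mathcal{H}}\big(R(f)-\hat{R}_N(f)\big).
\end{equation*}

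\textbf{Step 1 (concentration).} Replacing one sample $(\mathbf{x}_i,y_i)$ changes $\hat{R}_N(f)$ by at most $M/N$ for every $f$. Hence $\Phi$ has bounded differences $M/N$. McDiarmid's inequality then gives, with probability at least $1-\delta$,
\begin{equation*}
\Phi(\mathcal{D})\le \mathbb{E}[\Phi]+M\sqrt{\frac{\log(1/\delta)}{2N}}.
\end{equation*}

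\textbf{Step 2 (symmetrization).} Introduce a ghost sample $\mathcal{D}'$ and write $R(f)=\mathbb{E}_{\mathcal{D}'}[\hat{R}'_N(f)]$. Pulling the supremum inside the expectation (Jensen) gives
\begin{equation*}
\mathbb{E}[\Phi]\le \mathbb{E}\sup_f\big(\hat{R}'_N(f)-\hat{R}_N(f)\big).
\end{equation*}
The pairs of samples are exchangeable, so we may insert independent Rademacher signs $\sigma_i$. Splitting the supremum of a sum into a sum of suprema yields $\mathbb{E}[\Phi]\le 2\mathcal{R}_N(\mathcal{G})$.

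\textbf{Step 3 (contraction).} Apply Talagrand's contraction lemma to the $\lambda$-Lipschitz map $t\mapsto\ell(t,y)$, for each fixed $y$. This gives $\mathcal{R}_N(\mathcal{G})\le\lambda\,\mathcal{R}_N(\mathcal{H})$. Combining with Steps 1 and 2 yields the stated inequality simultaneously for all $f\in\mathcal{H}$.

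\textbf{Main obstacle: measurability and finiteness.} $\mathcal{H}$ consists of depth-$\le L$ expression trees with real constants, so it is uncountable. We therefore need $\Phi$ to be measurable and $\mathcal{R}_N(\mathcal{H})$ to be finite. First I would restrict to a countable dense subclass, e.g.\ rational constants. I would then argue that, on bounded $\mathcal{X}$ with constants in a compact range, the trees are continuous in their constants, so the supremum over $\mathcal{H}$ equals the supremum over this subclass. The boundedness assumption of the lemma also has to be read with care: the loss is bounded by $M$ only if $\mathcal{H}$ excludes singular or unbounded outputs. I would state this explicitly, e.g.\ by clipping outputs to $\mathcal{Y}$, as part of the detailed formulation referenced in the appendix.
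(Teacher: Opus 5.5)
Your proposal is correct and follows essentially the same route as the paper's proof. Both apply McDiarmid to $\Phi(\mathcal{D})=\sup_{f\in\mathcal{H}}(R(f)-\hat{R}_N(f))$ with bounded differences $M/N$, symmetrize with a ghost sample and Rademacher signs to get $2\mathcal{R}_N(\ell\circ\mathcal{H})$, and finish with Talagrand's contraction to reach $\lambda\mathcal{R}_N(\mathcal{H})$. Your added care about measurability and clipping goes beyond the paper, which secures boundedness only through its appendix assumption (rejecting singular candidates or truncating outputs) and does not address measurability at all.
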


Based on this standard bound, we can formally derive the origin of pseudo-equations.

\begin{corollary}[The Pseudo-Equation Trap]
\label{coro:pseudo}
In symbolic regression, the hypothesis space $\mathcal{H}$ grows exponentially with depth $L$, causing $\mathcal{R}_N(\mathcal{H})$ to be excessively large. 
This looseness in the generalization bound allows for the existence of functions $f_{\text{pseudo}} \in \mathcal{H}$ such that $\hat{R}_N(f_{\text{pseudo}}) \approx 0$ (fitting data perfectly) while $R(f_{\text{pseudo}}) \gg 0$ (violating scientific principles). We  define such functions as \textit{pseudo-equations}.
\end{corollary}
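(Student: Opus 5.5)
The plan is to make Corollary~\ref{coro:pseudo} precise in two parts. First, we bound the growth of $\mathcal{H}$ combinatorially. Second, we show that this growth forces $\mathcal{R}_N(\mathcal{H})$ to be large relative to $N$, so the bound in Lemma~\ref{lem:gen_bound} becomes vacuous. Finally, we exhibit an interpolating expression with large expected risk.

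\textbf{Step 1: counting trees.} Fix a finite operator set with $u$ unary operators, $b$ binary operators, and $t$ terminals (variables and constant placeholders). Let $T_L$ be the number of trees of depth at most $L$. It satisfies the recursion $T_{L} = t + u\,T_{L-1} + b\,T_{L-1}^2$. Therefore $T_L \ge b^{2^{L-1}-1} t^{2^{L-1}}$, which is at least exponential in $L$ (in fact doubly exponential). For the structure alone this gives $|\mathcal{H}| \ge c^{2^{L}}$.

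\textbf{Step 2: Rademacher complexity.} The upper direction is Massart's lemma, $\mathcal{R}_N \le B\sqrt{2\log|\mathcal{H}|/N}$, which shows the bound scales like $B\sqrt{2^L/N}$. That alone only says the bound is loose. The corollary needs the complexity to actually be large, so I would argue a lower bound via shattering. For $L \gtrsim \log_2 N$, trees with free constants can represent a polynomial of degree $N-1$, or a Lagrange interpolant, in one coordinate of $\mathbf{x}$. Such trees realize every sign pattern $\sigma \in \{\pm1\}^N$ on $N$ distinct points, scaled to magnitude $B$. Taking the supremum inside the expectation then gives $\mathcal{R}_N(\mathcal{H}) \ge B$, a constant independent of $N$. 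Plugging this into Lemma~\ref{lem:gen_bound} yields a right-hand side of order $\hat{R}_N(f) + 2\lambda B$, which provides no control.

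\textbf{Step 3: existence of $f_{\text{pseudo}}$.} Let the data come from $y = f_{\text{true}}(\mathbf{x}) + \epsilon$ with noise variance $\sigma^2 > 0$, and take $f_{\text{pseudo}}$ to be the interpolant of the noisy sample from Step 2. Then $\hat{R}_N(f_{\text{pseudo}}) = 0$. The expected risk satisfies $R(f_{\text{pseudo}}) = \sigma^2 + \mathbb{E}[(f_{\text{pseudo}} - f_{\text{true}})^2]$. The first term is already bounded away from zero. The second term is large under Runge-type oscillation, or off the support of the training inputs, which is the OOD case. This gives $R \gg 0$ while $\hat{R}_N \approx 0$.

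\textbf{Main obstacle.} The hardest step is the lower bound in Step 2. It requires an explicit depth-$O(\log N)$ construction of the interpolating polynomial from the available operators, for example through repeated squaring and products, and it must respect the output bound $M$. If necessary, I would fall back on clipping the output, and on stating the trap informally as ``the bound is uninformative'' rather than as a sharp lower bound.
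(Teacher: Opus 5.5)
The paper gives no proof of Corollary~\ref{coro:pseudo}. It is asserted as an informal consequence of Lemma~\ref{lem:gen_bound}, via the chain ``$\mathcal{H}$ grows exponentially in $L$ $\Rightarrow$ $\Rad_N(\mathcal{H})$ is large $\Rightarrow$ the bound is loose $\Rightarrow$ pseudo-equations exist.'' Your route is genuinely different and sounder. You see that counting only yields an \emph{upper} bound, via Massart's lemma, which in any case needs the real constants discretized because $\mathcal{H}$ is uncountable. You also see that a vacuous upper bound cannot by itself produce any $f_{\text{pseudo}}$, so you replace counting by interpolation capacity. That leaves Step~1 logically idle.

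Your ``main obstacle'' is routine. Estrin's scheme $p=p_0+x^{2^{m-1}}p_1$, with $x^{2^{m-1}}$ obtained by repeated squaring, builds any polynomial of degree $<2^m$ at depth $2m$ from $+$, $\times$ and constant leaves. Clipping to $[-B,B]$, which the remark after Assumption~\ref{assump:boundedness} permits, leaves the sample values $B\sigma_i$ intact. Hence $\Rad_N(\mathcal{H})=B$ once $L\ge 2\lceil\log_2 N\rceil$, provided the chosen coordinate has a continuous marginal so that the $x_i$ are a.s.\ distinct.

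If $\sin$ is a primitive, as in all the paper's experiments, even depth $O(1)$ suffices. The $x_i$ are a.s.\ rationally independent, so by Kronecker's theorem $(\omega x_1,\dots,\omega x_N)\bmod 2\pi$ is dense in the torus. Then $\{B\sin(\omega x)\}$ alone has Rademacher complexity $B$ for every $N$. The operative mechanism is free real constants, not the exponential count in $L$.

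The genuine gap is Step~3. It does not show $R(f_{\text{pseudo}})\gg 0$ in any sense that separates a pseudo-equation from the truth.
\begin{itemize}
\item The $\sigma^2$ term is shared by $f_{\text{true}}$, so $R\ge\sigma^2$ certifies nothing, and it vanishes in the noise-free setting. The relevant quantity is the excess risk $\mathbb{E}[(f_{\text{pseudo}}-f_{\text{true}})^2]$.
\item Interpolation alone does not force this to be large. A function equal to $f_{\text{true}}$ except for narrow spikes at the $x_i$ interpolates the noisy data with excess risk near zero. So invoking Runge oscillation of a clipped Lagrange interpolant at random nodes is an unproved assertion.
\item The OOD remark is irrelevant. $R$ is the risk under the same $\mathcal{P}$ that generated $\mathcal{D}$, so behaviour off the training support does not enter it.
\end{itemize}

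Since the claim is existential, you should \emph{choose} a bad interpolant. Let $p$ interpolate $(x_i,y_i)$ and let $q(x)=\prod_{i=1}^N(x-x_i)^2$, which has depth $O(\log N)$ via a balanced product tree. Set $f_c=\mathrm{clip}_{[-B,B]}(p\pm c\,q)$.
\begin{itemize}
\item Assuming $|y_i|\le B$, you get $\hat{R}_N(f_c)=0$ for every $c$.
\item As $c\to\infty$, $f_c\to\pm B$ off the $\mathcal{P}$-null set $\{x_i\}$. By bounded convergence, $\mathbb{E}[(f_c-f_{\text{true}})^2]\to\mathbb{E}[(B\mp f_{\text{true}})^2]$.
\item These two limits sum to $2B^2+2\,\mathbb{E}[f_{\text{true}}^2]$. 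So for one choice of sign and large $c$, the excess risk is at least $B^2-\varepsilon$.
\end{itemize}
This closes the argument with or without noise.
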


To address this, PG-SR restricts the discovery process to a scientifically consistent subspace, as defined below.
\begin{definition}[Prior-Constrained Subspace]
Given a set of prior constraints $\mathcal{C}$ verified by domain experts, the constrained subspace is $\mathcal{H}_{\mathcal{C}} = \{f \in \mathcal{H} \mid V(f, \mathcal{C})=1\}$.
\end{definition}

\begin{proposition}[Consistency-Guaranteed Generalization]
\label{prop:complexity}
Assuming  $f_{\text{true}} \in \mathcal{H}_{\mathcal{C}}$, enforcing constraints $\mathcal{C}$  reduces the Rademacher complexity:
\begin{equation}
    \Rad_N(\mathcal{H}_{\mathcal{C}}) \le  \Rad_N(\mathcal{H}).
\end{equation}
Consequently, searching in $\mathcal{H}_{\mathcal{C}}$ yields a  tighter generalization bound than searching in $\mathcal{H}$. The formal proof is provided in Appendix~\ref{sec:proof_prop_3_5}.
\end{proposition}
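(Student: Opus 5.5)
The plan is to prove the inequality directly from the definition of empirical Rademacher complexity, using only that $\mathcal{H}_{\mathcal{C}}\subseteq\mathcal{H}$. I would then combine it with Lemma~\ref{lem:gen_bound} to obtain the generalization statement. The assumption $f_{\text{true}}\in\mathcal{H}_{\mathcal{C}}$ is not needed for the inequality itself. It enters only in the interpretation: restricting the search does not exclude the target, so the approximation error is not increased while the estimation term shrinks.

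\textbf{Step 1 (monotonicity).} Fix a sample $S=(\mathbf{x}_1,\dots,\mathbf{x}_N)$ and i.i.d.\ Rademacher variables $\sigma_1,\dots,\sigma_N$. By definition,
\begin{equation*}
\hat{\Rad}_S(\mathcal{H}_{\mathcal{C}})=\mathbb{E}_{\sigma}\Big[\sup_{f\in\mathcal{H}_{\mathcal{C}}}\tfrac{1}{N}\textstyle\sum_{i}\sigma_i f(\mathbf{x}_i)\Big].
\end{equation*}
Since $\mathcal{H}_{\mathcal{C}}=\{f\in\mathcal{H}: V(f,\mathcal{C})=1\}\subseteq\mathcal{H}$, for each realization of $\sigma$ the supremum over the smaller set is at most the supremum over $\mathcal{H}$. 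Taking $\mathbb{E}_\sigma$ preserves this, and so does the outer expectation over $S$. This gives $\Rad_N(\mathcal{H}_{\mathcal{C}})\le\Rad_N(\mathcal{H})$. The same argument applies verbatim to the loss class $\ell\circ\mathcal{H}_{\mathcal{C}}\subseteq\ell\circ\mathcal{H}$ if the appendix formulation of Lemma~\ref{lem:gen_bound} is phrased that way.

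\textbf{Step 2 (bound).} I would apply Lemma~\ref{lem:gen_bound} with $\mathcal{H}_{\mathcal{C}}$ in place of $\mathcal{H}$. The Lipschitz constant $\lambda$ and the bound $M$ are inherited from $\mathcal{H}$, because they are properties of the loss on a superset. This yields, with probability at least $1-\delta$, for all $f\in\mathcal{H}_{\mathcal{C}}$,
\begin{equation*}
R(f)\le\hat R_N(f)+2\lambda\Rad_N(\mathcal{H}_{\mathcal{C}})+M\sqrt{\log(1/\delta)/(2N)}.
\end{equation*}
By Step 1, the right-hand side is no larger than the corresponding bound with $\Rad_N(\mathcal{H})$, which is the tighter bound claimed. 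Because $f_{\text{true}}\in\mathcal{H}_{\mathcal{C}}$, the empirical minimizer over $\mathcal{H}_{\mathcal{C}}$ satisfies $\hat R_N(\hat f_{\mathcal{C}})\le\hat R_N(f_{\text{true}})$. A standard two-sided uniform-convergence argument then gives an excess-risk bound of order $4\lambda\Rad_N(\mathcal{H}_{\mathcal{C}})+O(\sqrt{\log(1/\delta)/N})$ relative to $f_{\text{true}}$.

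\textbf{Main obstacle.} The difficulty lies in interpretation rather than technique. The inequality proved is only non-strict, and a claim of strict reduction is false in general, since the constraints may not remove functions that attain the supremum. If a stronger statement is wanted, I would first try a finite-class Massart bound, $\Rad_N(\mathcal{H}_{\mathcal{C}})\le B\sqrt{2\log|\mathcal{H}_{\mathcal{C}}|/N}$. This shows the bound improves in proportion to $\log$ of the pruned cardinality and makes the dependence on how many trees the constraints exclude explicit. Any such improvement, however, must be stated conditionally on the constraints actually removing functions.
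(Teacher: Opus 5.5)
Your proof is correct and takes essentially the same approach as the paper. The appendix argument is exactly your Step 1: $\mathcal{H}_{\mathcal{C}}\subseteq\mathcal{H}$ makes the supremum over the constrained class pointwise no larger, expectations are then taken, and the paper likewise remarks that the inequality is only non-strict in general. Your Step 2 and the excess-risk remark make explicit the ``consequently'' clause that the paper leaves implicit, and you are right that the hypothesis $f_{\text{true}}\in\mathcal{H}_{\mathcal{C}}$ plays no role in the inequality itself.
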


Proposition~\ref{prop:complexity} reveals that the effectiveness of PG-SR depends not on reducing the size of the search space, but on aligning $\mathcal{H}_{\mathcal{C}}$ with the underlying truth. Unlike approaches that incorporate constraints only implicitly and may exclude the ground truth, PG-SR employs explicit, human-verified constraints to prune pseudo-equation regions of $\mathcal{H}$ while preserving the ground truth.

\subsection{Framework Overview}
PG-SR is a prior-guided SR framework with a three-stage pipeline: warm-up, evolution, and refinement (Figure~\ref{FIG:2}).
 The framework manages candidate equations in a global experience pool $E$, which includes an equation pool of  candidate equations maintained using logically partitioned islands~\cite{whitley1999island} to preserve diversity, and an insight pool for storing insights from success and failure experiences. In each iteration, LLMs generate new hypotheses based on high-scoring historical candidates and insights from previous iterations; the resulting candidates are subsequently optimized, evaluated, and used to update the pools. This framework ensures continuous evolution in both scientific consistency and numerical accuracy. Detailed prompt templates are provided in Appendix~\ref{app:prompts}.

\subsubsection{Prior-constraint Checker and Prior-Annealed Constrained Evaluation}
To ensure scientific consistency, we introduce a prior constraint checker \(V\) that validates candidate equations against a set of predefined constraints \(\mathcal{C}\). Since directly constructing these prior constraints is challenging, we adopt an LLM-assisted workflow to generate them (Figure~\ref{FIG:2}). First, an LLM extracts domain knowledge from the problem description. Then, human experts combine the LLM outputs with statistical patterns derived from data analysis to filter candidate priors. Finally, these validated priors are used by the LLM to generate the final usable prior constraints, which are represented in executable program form.
For example, in modeling \textit{E. coli} growth dynamics, the LLM extracts relevant biological knowledge from the problem description, such as the inhibition of growth under extreme temperature or pH conditions; human experts integrate this knowledge with statistical patterns to verify candidate priors, after which the LLM translates the validated priors into executable prior constraint programs. These constraints conform to established biological principles rather than relying on ground-truth equations. Further details on the  priors are provided in Appendix~\ref{app:priors}.

While explicit prior constraints can mitigate pseudo-equations, applying them too rigidly early in the discovery process can restrict exploration. Therefore, we propose the Prior-Annealed Constrained Evaluation (PACE) mechanism, which allows temporary violations initially and gradually tightens enforcement as evolution proceeds. 
PACE instantiates the prior weighting term $w(\cdot)$ in Eq.~(\ref{eq4}) via a shrink-and-shift
transformation of $S_{\text{MSE}}$.
Specifically, $S_{\text{MSE}}$ is min-max normalized within each island to make candidates more distinguishable:
$\tilde{S}_{\text{MSE}}(f,\mathcal{D}) = \frac{S_{\text{MSE}} - \min S_{\text{MSE}}}{\max S_{\text{MSE}} - \min S_{\text{MSE}}}$,
where $\min S_{\text{MSE}}$ and $\max S_{\text{MSE}}$ are the minimum and maximum scores in the island.
The total score with PACE is then defined as
\begingroup
\small
\begin{equation}
S(f,\mathcal{D},\mathcal{C}) =
\begin{cases}
\sigma(t)((1-\beta)+2\beta \tilde{S}_{\text{MSE}}) - \delta(t), & V=0,\\
(1-\beta)+2\beta \tilde{S}_{\text{MSE}}, & V=1,
\end{cases}
\end{equation}
\endgroup
where $\beta$ is a fixed scaling parameter.  
For constraint-violating candidates, the score is transformed as
\begingroup
\small
\begin{equation}
\phi(t) = \frac{B^t - 1}{B - 1}, \quad
\sigma(t) = [1 - \eta \cdot \phi(t)]_+, \quad
\delta(t) = \alpha \cdot \phi(t).
\end{equation}
\endgroup
with $t = N_{\mathrm{curr}} / N_{\mathrm{max}}$, where
$N_{\mathrm{curr}}$ and $N_{\mathrm{max}}$ are the current and total sampled candidates. Further details on the PACE mechanism can be found in the appendix~\ref{app:pace}.

\subsubsection{Stage I: Data-driven Warm-up}\label{sec:warmup}
The Warm-up stage initializes the experience pool $E$ with a data-driven set of preliminary equations, providing a solid foundation for subsequent evolution.
Statistical and structural analyses of the dataset $\mathcal{D}$, including sampled points, variable ranges, correlations, and nonlinear contributions, extract key data features to guide the LLM in generating equation skeletons aligned with the underlying system behavior.
Skeletons are then optimized with L-BFGS-B~\cite{byrd1995limited} to minimize MSE, and all candidates are retained regardless of prior constraints. 
If none of the generated skeletons satisfy the  prior constraints, a repair procedure is triggered to modify them. Finally, candidates are assigned cyclically to islands to ensure even distribution.

\begin{table*}[t]
\centering
\caption{Quantitative results of different approaches. ``-'' indicates that the method failed to generate equations.}
\label{tab:sr_baseline_comparison}
\small
\setlength{\tabcolsep}{2.3pt} 
\renewcommand{\arraystretch}{1.03}

\begin{tabular}{l cccccccccc}
\toprule
\textbf{Method} 
& \multicolumn{2}{c}{\textbf{E. coli Growth}} 
& \multicolumn{2}{c}{\textbf{Stress--Strain}} 
& \multicolumn{2}{c}{\textbf{CRK}} 
& \multicolumn{2}{c}{\textbf{Oscillator 1}} 
& \multicolumn{2}{c}{\textbf{Oscillator 2}} \\
\cmidrule(lr){2-3} \cmidrule(lr){4-5} \cmidrule(lr){6-7} \cmidrule(lr){8-9} \cmidrule(lr){10-11}
& ID $\downarrow$ & OOD $\downarrow$ & ID $\downarrow$ & OOD $\downarrow$ & ID $\downarrow$ & OOD $\downarrow$ & ID $\downarrow$ & OOD $\downarrow$ & ID $\downarrow$ & OOD $\downarrow$ \\
\midrule

\rowcolor[gray]{0.95} \multicolumn{11}{c}{\textit{Search-based approaches}} \\
GPLearn & 1.07e+00 & 1.03e+00 & 3.94e-01 & 1.00e+00 & 1.09e+00 & 1.01e+00 & 9.75e-03 & 5.51e-01 & 1.87e-01 & 2.71e-01 \\
PySR & 1.51e-01 & 7.10e-01 & 1.87e-02 & 7.72e-02 & 1.33e-09 & 2.21e-08 & 6.12e-12 & 2.55e-05 & 4.40e-10 & 2.05e-06 \\
DSR & 1.82e-01 & 3.28e-01 & 3.42e-01 & 7.94e-01 & 1.78e-01 & 3.80e+00 & 1.04e-02 & 3.80e-01 & 1.87e-01 & 2.82e-01 \\
uDSR & 5.10e-01 & 2.02e+00 & 8.81e-02 & 4.93e-01 & 2.54e-10 & 2.11e-07 & 1.41e-05 & 1.65e-02 & 1.89e-03 & 2.93e-02 \\
Operon & 4.52e-01 & 9.98e-01 & 3.46e-02 & 1.15e-01 & 1.75e-07 & 2.33e-07 & 3.78e-04 & 3.80e-03 & 1.74e-01 & 1.02e-01 \\
GP-GOMEA & 3.68e-01 & 4.51e+01 & 7.09e-02 & 2.15e-01 & 1.13e-08 & 1.36e-07 & 1.41e-03 & 1.29e+00 & 1.50e-01 & 1.63e-01 \\

\midrule
\rowcolor[gray]{0.95} \multicolumn{11}{c}{\textit{Transformer-based approaches}} \\
TPSR & -- & -- & 6.02e-01 & 1.95e+00 & 8.48e-07 & 2.44e-04 & 4.81e-03 & 5.13e-01 & 3.07e-01 & 9.01e-01 \\
E2E & 8.91e-01 & 8.32e-01 & 1.16e-01 & 5.12e-01 & 2.51e-03 & 3.21e-01 & 4.01e-03 & 4.77e-02 & 1.88e-01 & 7.18e-01 \\
NeSymReS & -- & -- & 8.58e-01 & 7.46e-01 & 9.70e-01 & 3.64e+02 & 4.01e-03 & 5.42e-01 & 9.85e-01 & 9.99e-01 \\
PhyE2E & 2.14e-01 & 5.33e-01 & 3.21e-02 & 1.47e-01 & 1.26e-05 & 1.79e-03 & 5.93e-03 & 1.65e-02 & 9.93e-02 & 6.36e-01 \\

\midrule
\rowcolor[gray]{0.95} \multicolumn{11}{c}{\textit{LLM-based approaches (Backbone: Llama-3.3-70B)}} \\
LLM-SR & 4.61e-03 & 2.19e-02 & 1.40e-02 & 6.77e-02 & 1.32e-08 & 4.55e-07 & 1.13e-05 & 1.11e-02 & 8.99e-08 & 1.60e-05 \\
LaSR & 1.81e-02 & 2.24e-02 & 1.64e-02 & 6.18e-02 & 9.55e-11 & 1.72e-07 & 7.15e-06 & 5.44e-03 & 6.63e-06 & 8.80e-04 \\
DrSR & 1.92e-02 & 2.77e-02 & 2.11e-02 & 7.30e-02 & 1.64e-10 & 8.30e-08 & 7.73e-13 & 1.87e-06 & 9.51e-05 & 1.92e-03 \\
\textbf{PG-SR(Ours)} & \colorbox{gray!30}{\textbf{1.15e-03}} & 3.37e-03 & \colorbox{gray!30}{\textbf{6.14e-03}} & \colorbox{gray!30}{\textbf{3.21e-02}} & 6.02e-11 & 3.21e-08 & 3.84e-12 & 5.64e-07 & 3.13e-09 & 7.17e-07 \\

\midrule
\rowcolor[gray]{0.95} \multicolumn{11}{c}{\textit{LLM-based approaches (Backbone: GPT-4o-mini)}} \\
LLM-SR & 9.34e-03 & 1.22e-02 & 2.10e-02 & 8.60e-02 & 1.14e-07 & 1.61e-04 & 4.24e-09 & 1.63e-05 &2.32e-11 & 5.08e-07 \\
LaSR & 8.46e-03 & 1.00e-01 & 2.04e-02 & 8.77e-02 & 3.01e-09 & 7.25e-08 & 1.13e-05 & 9.66e-03 & 2.68e-08 & 3.51e-05 \\
DrSR & 9.49e-02 & 2.23e-01 & 2.42e-02 & 1.41e-01 & 1.42e-10 & 3.44e-08 & 1.45e-12 & 3.78e-06 & 1.43e-04 & 1.04e-02 \\
\textbf{PG-SR(Ours)} & 1.32e-03 & \colorbox{gray!30}{\textbf{1.59e-03}} & 1.49e-02 & 4.63e-02 & \colorbox{gray!30}{\textbf{1.74e-11}} & \colorbox{gray!30}{\textbf{1.22e-08}} & \colorbox{gray!30}{\textbf{6.62e-14}} & \colorbox{gray!30}{\textbf{2.95e-08}} &  \colorbox{gray!30}{\textbf{1.79e-11}} & \colorbox{gray!30}{\textbf{8.32e-11}} \\
\bottomrule
\end{tabular}
\end{table*}

\begin{figure*}[t]
	\centering
		\includegraphics[width=0.9\textwidth,height=13.3cm]{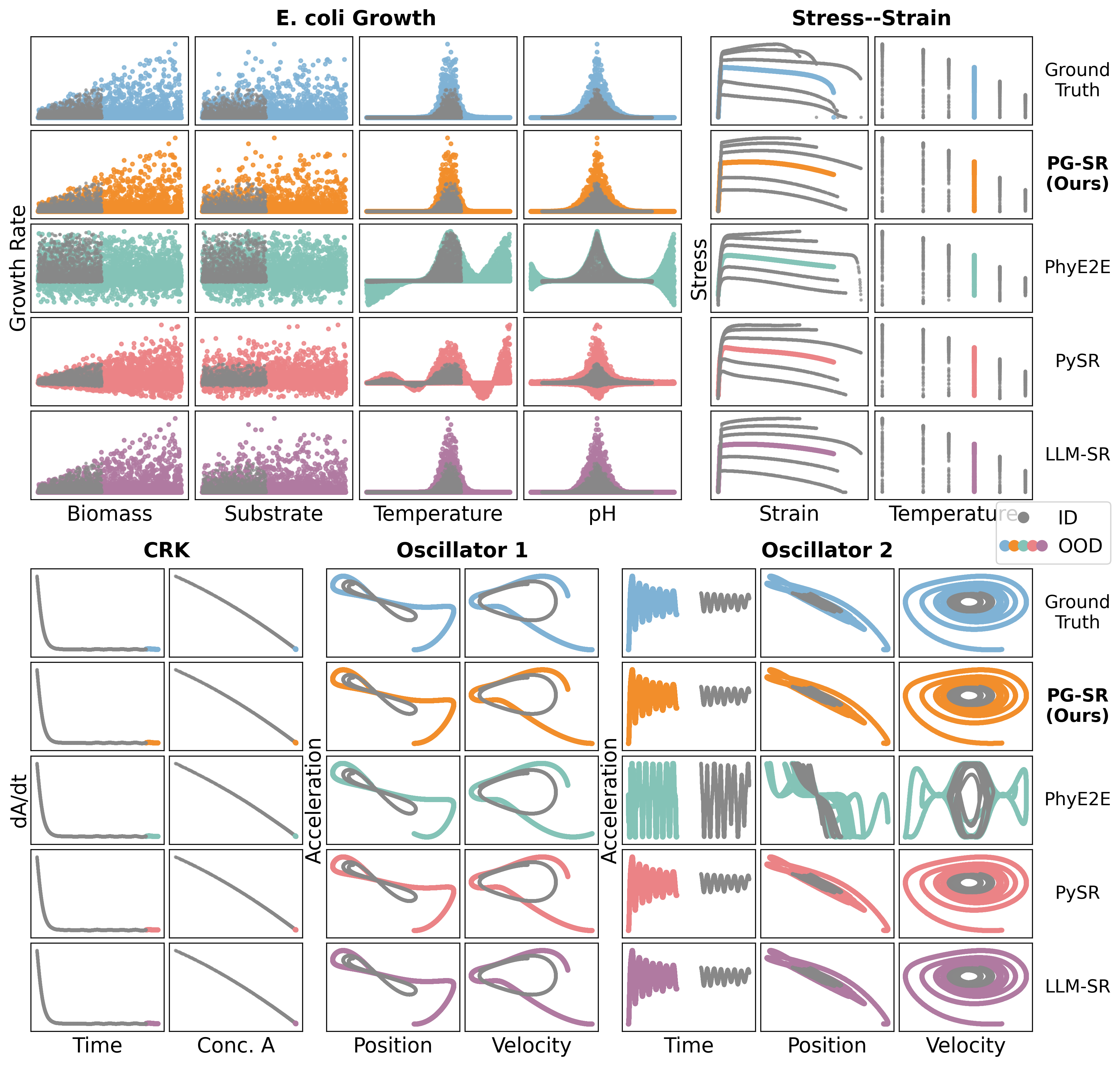}
\caption{Qualitative comparison of predictive trajectories for PG-SR and baselines on ID (gray) and OOD (colored) regions.}
	\label{fig:qualitative_examples}
\end{figure*} 

\subsubsection{Stage II: Prior-guided Evolution}\label{sec:evolution}
In this stage, equations are iteratively evolved using parallel islands to prevent premature convergence.
A  prompt is constructed for the LLM by randomly selecting an island and applying a two-level sampling strategy
to choose skeletons. Within each island, candidate skeletons with the same $S_{\text{MSE}}$ score are grouped into a cluster. Clusters are then sampled according to their PACE scores using a softmax distribution:
\begin{equation}
P(\text{cluster}_i) = \frac{\exp(S_i / \tau)}{\sum_{j=1}^{K} \exp(S_j / \tau)} \,,
\end{equation}
Where $S_i$ is the PACE score of cluster $i$, $K$ is the number of clusters, and $\tau$ is the temperature.
Within each cluster, skeletons are sampled based on length, with shorter programs favored. If a cluster contains valid skeletons $(V=1)$, sampling is done only from them; otherwise, all skeletons are sampled using the same strategy. The prompt combines the problem description, exemplar skeletons from the equation pool, and insights retrieved from the insight pool, guiding the LLM to generate better candidates.
After parsing the generated equations, parameters are optimized via L-BFGS-B to minimize MSE. Candidates are validated against prior constraints with a stochastic retry mechanism: if a candidate fails after optimization, its parameters are re-initialized and re-optimized up to 10 times. Then, they are scored using $S_{\text{MSE}}$ and added to $E$ to guide evolution. The evolution continues until the maximum iterations are reached.

\subsubsection{Stage III: Residual-Enhanced Refinement with Reflection Mechanism}\label{sec:refine}
To further refine equation skeletons, we implement a residual-enhanced refinement with a reflection mechanism, periodically applied to the best candidates from each island. For each island optimum, the process begins by analyzing its residual vector using statistical metrics (NMSE, bias, skewness, and kurtosis), SHAP values, and decision-tree-based high-error region analysis to identify defect variables and feature subspaces responsible for major fitting errors. For structural guidance, we retrieve reference candidates from $E$ whose error profiles $\mathbf{r}_i$ are aligned with the current residual $\mathbf{r}$, measured by cosine similarity $\text{cosine\_sim}(\mathbf{r}, \mathbf{r}_i) = \mathbf{r}^{\top} \mathbf{r}_i / (\|\mathbf{r}\| \|\mathbf{r}_i\|)$.
For each top-ranked reference candidate, the system retrieves relevant information from the Insight Pool to construct an LLM prompt. If prior refinement histories exist, the prompt includes (i) the original equation, (ii) the improved equation, and (iii) a natural language explanation of the structural modification; otherwise, only the raw equation is provided. After each refinement, the system triggers a reflection mechanism: when a performance improvement is achieved, successful experiences are summarized; otherwise, failure cases are analyzed. The resulting insights are added to the Insight Pool to guide skeleton evolution within the same island, while successful experiences can be retrieved by other islands in subsequent refinements based on residual vector similarity.

\section{Experimental Results}
\subsection{Dataset}
We evaluate our method on datasets from biology, materials, chemistry, and physics drawn from LLM-SRBench~\cite{shojaee2025llm}, to avoid LLM memorization.

\textbf{E. coli Growth:} \textit{E. coli} growth dynamics under varying substrate, temperature, and pH conditions.

\textbf{Material Stress–Strain:} Experimental stress–strain measurements of Aluminum 6061-T651 collected across different temperature settings.

\textbf{Chemical Reaction Kinetics:} A chemical reaction system describing the temporal evolution of species concentrations.

\textbf{Nonlinear Oscillator 1:} A time-invariant nonlinear oscillator whose dynamics depend solely on the system states.

\textbf{Nonlinear Oscillator 2:} A time-dependent nonlinear oscillator driven by explicit temporal forcing.

Further details on the datasets are provided in Appendix~\ref{app:dataset}.

\subsection{Baselines and PG-SR Configuration}
We compare PG-SR with state-of-the-art symbolic regression approaches across three categories.
Search-based approaches include GPLearn~\cite{mccormick2019gplearn}, PySR~\cite{cranmer2023interpretable},
DSR~\cite{petersen2019deep}, uDSR~\cite{landajuela2022unified}, Operon~\cite{burlacu2020operon},
and GP-GOMEA~\cite{virgolin2021improving}.
Transformer-based approaches include TPSR~\cite{shojaee2023transformer}, E2E~\cite{kamienny2022end},
NeSymReS~\cite{biggio2021neural}, and PhyE2E~\cite{ying2025neural}.
LLM-based approaches include LLM-SR~\cite{shojaee2024llm}, LaSR~\cite{grayeli2024symbolic},
and DrSR~\cite{wang2025drsr}.
All experiments involving LLM-based approaches are conducted using GPT-4o-mini~\cite{openai_gpt4o_mini}. Implementation details are provided in Appendix~\ref{app:baselines} and Appendix~\ref{app:pg-config}.
For all approaches, the reported experimental results correspond to the best performance across multiple runs.

\begin{figure}[t]
    \centering
    \includegraphics[width=\linewidth]{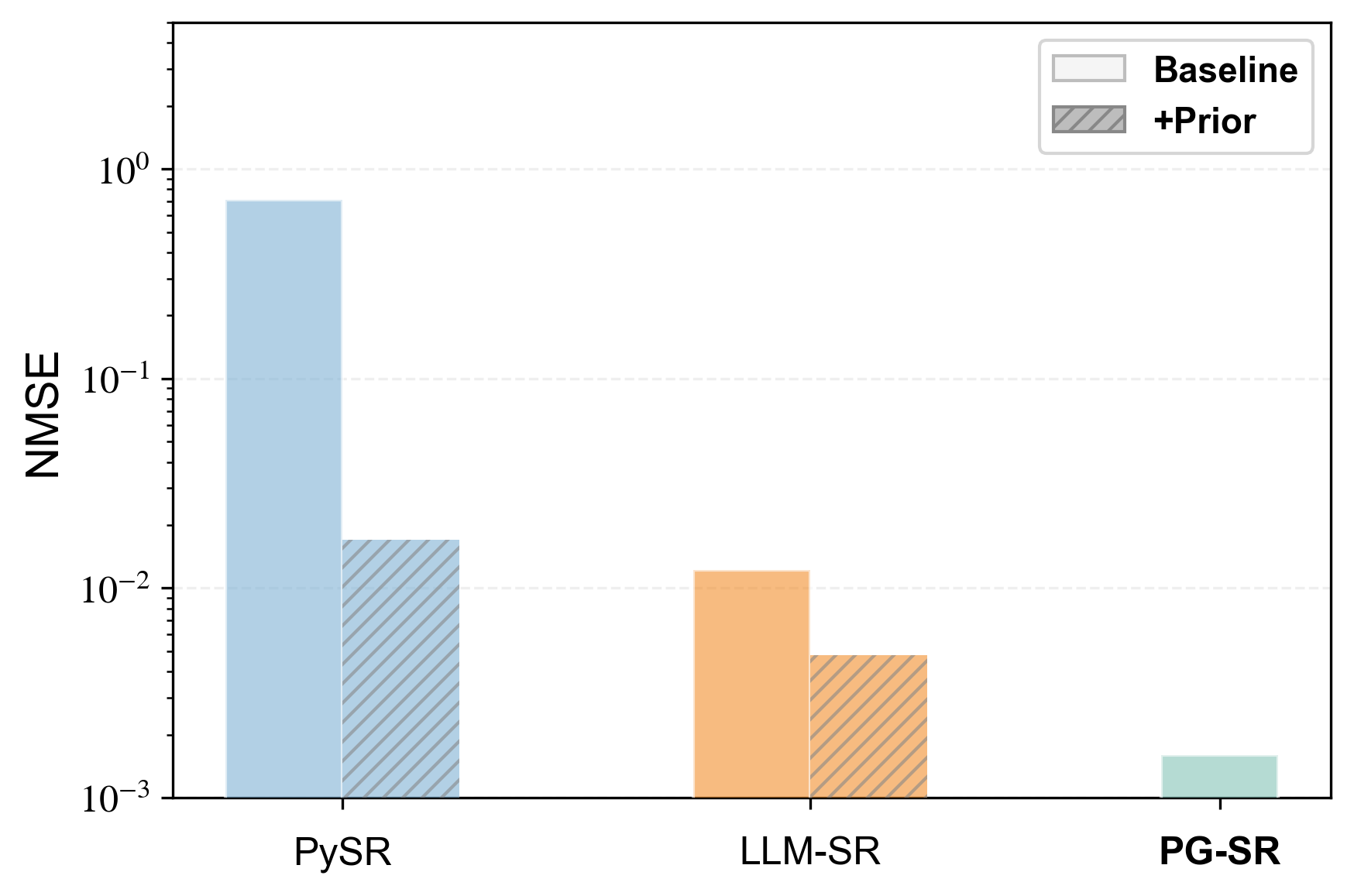}
    \caption{OOD generalization of PySR and LLM-SR with and without prior augmentation, compared to PG-SR.}
    \label{FIG:prior_augmentation}
\end{figure}

\subsection{Quantitative Results}
\label{Quantitative results}
We evaluate PG-SR using both ID and OOD data. OOD performance reflects whether the discovered equations adhere to underlying scientific principles, making it a key metric for scientific discovery. We use the normalized mean squared error (NMSE) as the evaluation metric, defined as
\begin{equation}
\mathrm{NMSE} = \frac{\frac{1}{N}\sum_{i=1}^{N}\left(y_i - \hat{y}_i\right)^2}{\frac{1}{N}\sum_{i=1}^{N}\left(y_i - \bar{y}\right)^2}.
\end{equation}
As shown in Table~\ref{tab:sr_baseline_comparison}, PG-SR achieves the lowest NMSE on both ID and OOD data across all datasets, demonstrating strong predictive accuracy and scientific consistency. 
Given that PG-SR relies on LLMs as a generation engine, it is important to assess whether this performance is sensitive to the backbone model. Therefore, we evaluate PG-SR with another LLM backbone, Llama-3.3-70B~\cite{dubey2024llama}. Results across most datasets indicate comparable performance across backbones, suggesting that PG-SR is relatively robust to the choice of LLM backbone. Besides, Appendix~\ref{app:discovered_equations} provides the equations discovered by PG-SR across all datasets, with the Oscillator 2 case showing an almost exact recovery of the ground-truth  equation.

\subsection{Qualitative Results}
For a more intuitive comparison, Figure~\ref{fig:qualitative_examples} visualizes the predictive trajectories, illustrating the behavior of each method.
As shown in Figure~\ref{fig:qualitative_examples}, within the ID regime (gray dots), PG-SR and all baseline approaches closely match the ground truth trajectories, indicating that most approaches can achieve accurate interpolation. However, in the OOD regions (colored dots), PG-SR exhibits  better alignment with the ground truth dynamics across diverse systems, while baseline approaches often deviate or diverge. Overall, PG-SR maintains scientific consistency with the underlying  equations,  following ground truth trajectories in both ID and OOD regions.

\subsection{Impact of Explicit Prior Constraints}
To investigate whether explicit prior constraints can mitigate pseudo-equations, we evaluated other representative baseline approaches, including PySR (search-based) and LLM-SR (LLM-based), with explicit prior constraints incorporated.
For a controlled comparison, both PySR and LLM-SR additionally leverage the prior constraint checker and the PACE mechanism from PG-SR for candidate evaluation, while preserving their original search and optimization procedures; specifically, PySR incorporates these components via a customized loss function, whereas LLM-SR adopts the same  strategy as PG-SR.
In contrast, transformer-based approaches require retraining on large-scale synthetic data to incorporate priors and are unsuitable for comparison.
As shown in Figure~\ref{FIG:prior_augmentation}, prior-augmented PySR and LLM-SR achieve better OOD generalization than their vanilla versions, demonstrating that explicit prior constraints can effectively improve scientific consistency. 
Nevertheless, their performance remains inferior to PG-SR, demonstrating the superiority of the PG-SR framework.

\begin{figure}[t]
    \centering
    \includegraphics[width=\linewidth]{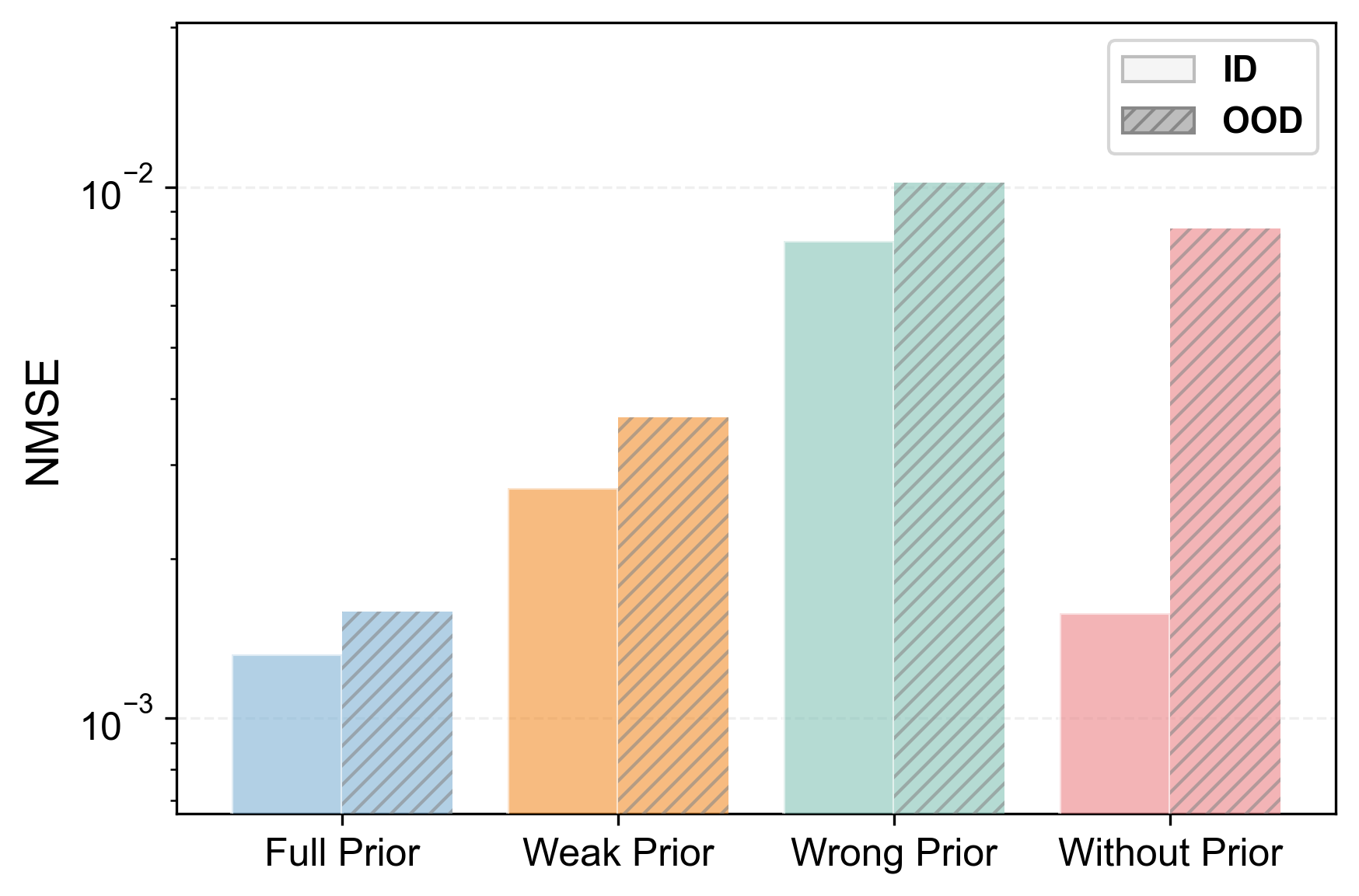}
    \caption{PG-SR performance under different prior quality settings: No Prior, Weak Priors, Wrong Priors, and Full Priors.}
    \label{FIG:prior_quality}
\end{figure}

\subsection{Sensitivity Analysis on Prior Quality}
Since prior knowledge may be incomplete or even misleading, we further assessed the sensitivity of PG-SR to prior quality under four settings on the E. coli Growth dataset: No Prior, Weak Priors (specifying only optimal temperature 37$^\circ$C and pH~7), 
Wrong Priors (incorrect values, e.g., optimal temperature 15$^\circ$C and pH~4), 
and Full Priors (details in Appendix~\ref{app:priors}).
As shown in Figure~\ref{FIG:prior_quality}, Full Priors yield the best performance, while Weak Priors provide partial improvements. 
Although wrong priors degrade performance, the results remain comparable to the no-prior setting, indicating that PG-SR does not collapse under misleading prior information. This robustness is enabled by PACE, which moderates the influence of priors by allowing observed data to retain influence during evolution.

\subsection{Ablation Study }
To evaluate the contribution of each component in PG-SR, we conduct ablation studies on the E. coli Growth dataset, with results summarized in Table~\ref{tab:ablation}.
Removing the warm-up stage (w/o Warmup) results in low-quality equation skeletons in the experience pool during the initial stage, leading to poor initialization and adversely affecting subsequent generation, thereby degrading overall performance within the same number of iterations.
Removing the PACE mechanism (w/o PACE) leads to the direct discarding of equation skeletons that violate prior constraints. This introduces discontinuities into the search landscape, which restricts search diversity and hinders the optimization process in the early stages, ultimately resulting in performance degradation.
Removing the refinement stage (w/o Refine) makes PG-SR more prone to local optima that satisfy the constraints but exhibit poor data fitting, highlighting the role of the refinement mechanism in guiding the search out of such regions.
Overall, these ablation results indicate that each component effectively contributes to the performance of PG-SR.
\begin{table}[t]
\centering
\small
\caption{Ablation study.}
\label{tab:ablation}
\setlength{\tabcolsep}{6pt}
\renewcommand{\arraystretch}{1}
\begin{tabular}{lcc}
\toprule
\textbf{Variant} &
\textbf{ID NMSE} $\downarrow$ &
\textbf{OOD NMSE} $\downarrow$ \\
\midrule
\rowcolor[gray]{0.95} \textbf{Full Variant} & \textbf{1.32e-3} & \textbf{1.59e-3} \\
w/o Warmup  & 5.70e-3 & 5.09e-2 \\
w/o PACE    & 1.28e-3 & 2.98e-1 \\
w/o Refine  & 8.75e-2 & 1.91e-1 \\
\bottomrule
\end{tabular}
\end{table}
\begin{figure}[b]
	\centering
    	\includegraphics[width=0.45\textwidth]{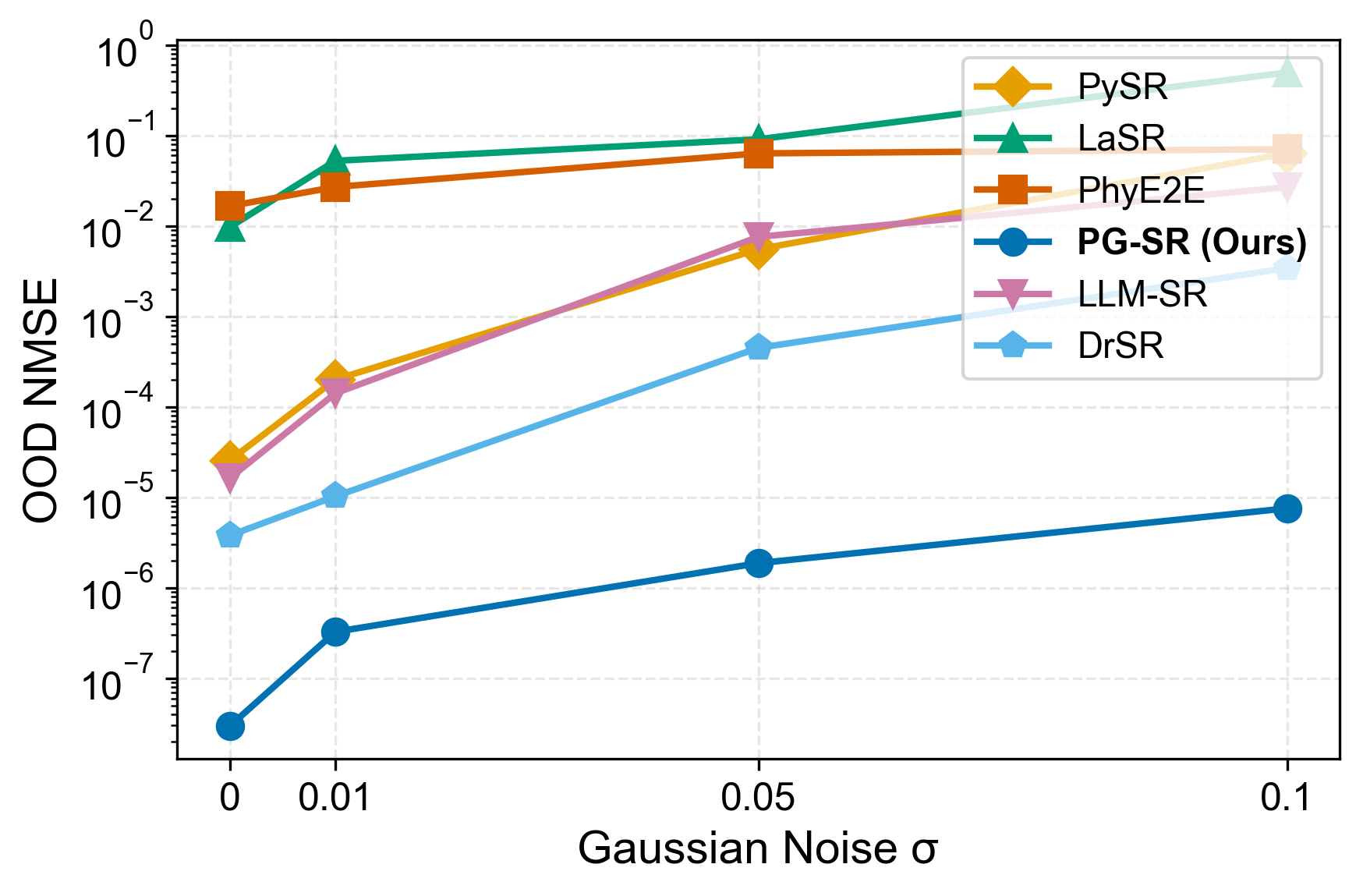}
\caption{Noise robustness analysis. OOD NMSE under different noise levels ($\sigma$) for PG-SR and baseline models.}
	\label{FIG:noise}
\end{figure} 
\subsection{Robustness to Noisy Data}
In real-world scenarios, data are often corrupted by noise, posing a challenge to existing approaches grounded in empirical risk minimization. To investigate whether PG-SR remains robust under such conditions, we add Gaussian perturbations to the Oscillator~1 dataset:
\begin{equation}
\tilde{x} = x + \epsilon, \quad \epsilon \sim \mathcal{N}(0, \sigma^2), \quad \sigma \in \{0.01, 0.05, 0.1\}.
\end{equation}
Since the training data are corrupted by noise, prior checking in this setting is performed in a statistical manner rather than through pointwise value matching.
As shown in Figure~\ref{FIG:noise}, as the noise level increases, baseline approaches tend to overfit noisy observations, leading to poor OOD generalization. 
While PG-SR is also slightly affected by noise, it maintains substantially better robustness than baselines due to the guidance provided by explicit prior constraints.

\subsection{Robustness to Data Scarcity}
\label{sec:data_sparse}
In real-world settings, data acquisition is often costly or constrained, making discovery under limited data essential. To evaluate PG-SR in data-scarce regimes and validate Proposition~\ref{prop:complexity}, we analyze performance from 100\% down to 5\% data availability.
As shown in Figure~\ref{fig:data_sparse}, PG-SR exhibits a slight NMSE increase when the data availability is reduced from 100\% to 50\%.
In this regime, the reduced data make it harder to distinguish between different  skeletons that satisfy the priors, leading to a temporary performance degradation.
When the data are further reduced from 50\% to 5\%, the discovery process becomes dominated by the  priors, which strongly restrict the hypothesis space, leaving the remaining candidate solutions  constrained in structure and thus making performance insensitive to further changes in data scale.

\begin{figure}[t]
    \centering
    	\includegraphics[width=0.45\textwidth]{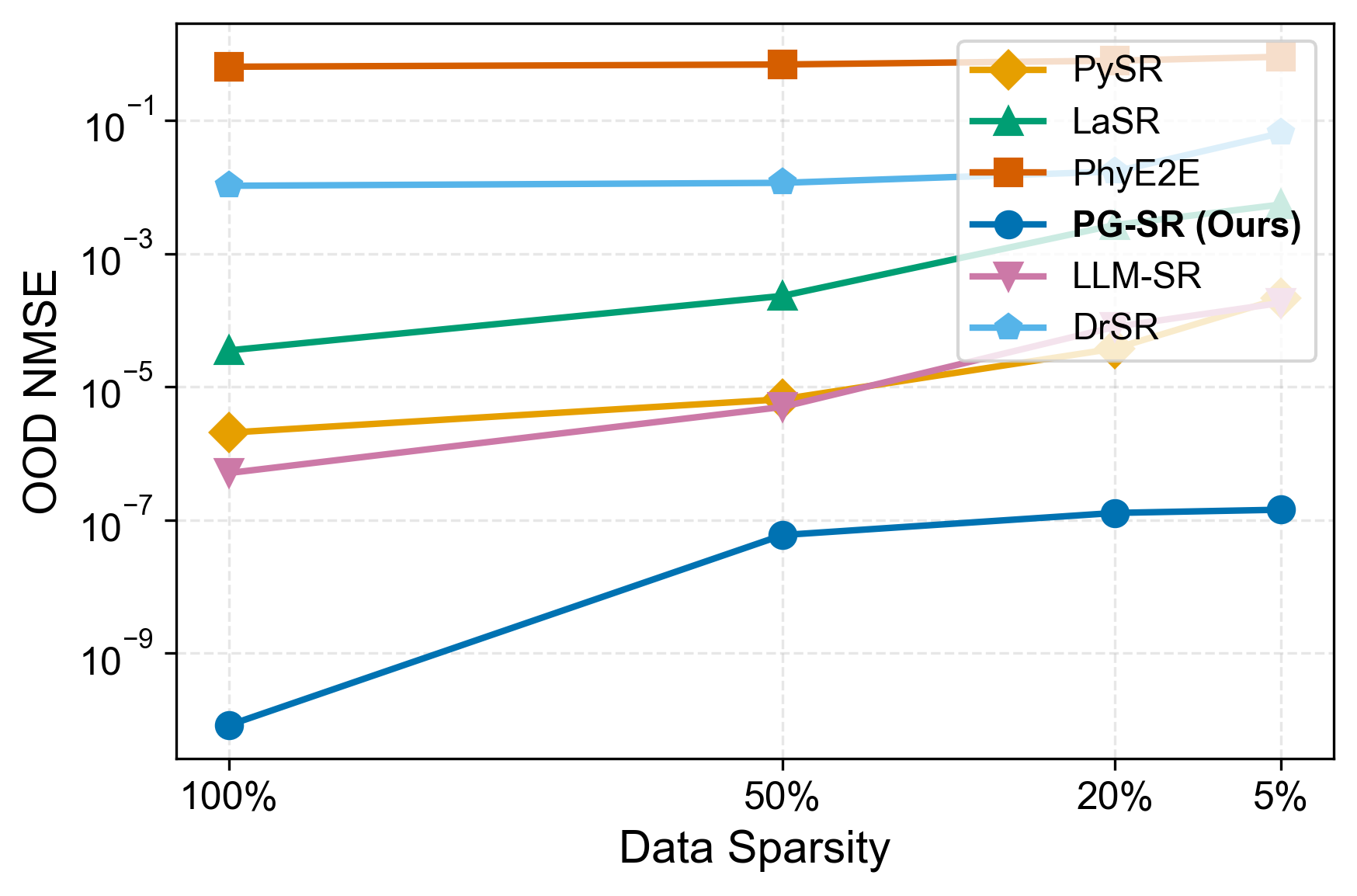}
\caption{Data Scarcity Analysis. OOD NMSE under different data availability settings for PG-SR and baseline models.}
    \label{fig:data_sparse}
\end{figure}

\section{Conclusion}
This paper proposes PG-SR to address the Pseudo-Equation Trap in symbolic regression, taking a step toward scientific consistency in equation discovery. Experiments show that PG-SR outperforms SOTA baselines across diverse domains.
Despite its effectiveness, PG-SR still relies on manual intervention when constructing executable prior constraints. Future work will explore automating prior constraint generation by fine-tuning LLMs to synthesize executable constraint programs from problem descriptions and data statistics, moving PG-SR toward a fully autonomous framework.

\section*{Broader Impact Statement}
This paper presents work whose goal is to advance the field of Machine Learning. 
There are many potential societal consequences of our work, none which we feel must be specifically highlighted here.

\bibliographystyle{icml2026}
\bibliography{icml2026}

@article{petersen2019deep,
  title={Deep symbolic regression: Recovering mathematical expressions from data via risk-seeking policy gradients},
  author={Petersen, Brenden K and Landajuela, Mikel and Mundhenk, T Nathan and Santiago, Claudio P and Kim, Soo K and Kim, Joanne T},
  journal={arXiv preprint arXiv:1912.04871},
  year={2019}
}

@article{cranmer2023interpretable,
  title={Interpretable machine learning for science with PySR and SymbolicRegression. jl},
  author={Cranmer, Miles},
  journal={arXiv preprint arXiv:2305.01582},
  year={2023}
}

@article{landajuela2022unified,
  title={A unified framework for deep symbolic regression},
  author={Landajuela, Mikel and Lee, Chak Shing and Yang, Jiachen and Glatt, Ruben and Santiago, Claudio P and Aravena, Ignacio and Mundhenk, Terrell and Mulcahy, Garrett and Petersen, Brenden K},
  journal={Advances in Neural Information Processing Systems},
  volume={35},
  pages={33985--33998},
  year={2022}
}

@article{virgolin2021improving,
  title={Improving model-based genetic programming for symbolic regression of small expressions},
  author={Virgolin, Marco and Alderliesten, Tanja and Witteveen, Cees and Bosman, Peter AN},
  journal={Evolutionary computation},
  volume={29},
  number={2},
  pages={211--237},
  year={2021},
  publisher={MIT Press One Rogers Street, Cambridge, MA 02142-1209, USA journals-info~...}
}

@misc{mccormick2019gplearn,
  author       = {McCormick, Trevor},
  title        = {gplearn: Genetic Programming in Python},
  year         = {2019},
  howpublished = {\url{https://github.com/trevorstephens/gplearn}},
  note         = {GitHub repository}
}

@inproceedings{burlacu2020operon,
  title={Operon C++ an efficient genetic programming framework for symbolic regression},
  author={Burlacu, Bogdan and Kronberger, Gabriel and Kommenda, Michael},
  booktitle={Proceedings of the 2020 genetic and evolutionary computation conference companion},
  pages={1562--1570},
  year={2020}
}

@article{shojaee2023transformer,
  title={Transformer-based planning for symbolic regression},
  author={Shojaee, Parshin and Meidani, Kazem and Barati Farimani, Amir and Reddy, Chandan},
  journal={Advances in Neural Information Processing Systems},
  volume={36},
  pages={45907--45919},
  year={2023}
}

@article{kamienny2022end,
  title={End-to-end symbolic regression with transformers},
  author={Kamienny, Pierre-Alexandre and d'Ascoli, St{\'e}phane and Lample, Guillaume and Charton, Fran{\c{c}}ois},
  journal={Advances in Neural Information Processing Systems},
  volume={35},
  pages={10269--10281},
  year={2022}
}

@inproceedings{biggio2021neural,
  title={Neural symbolic regression that scales},
  author={Biggio, Luca and Bendinelli, Tommaso and Neitz, Alexander and Lucchi, Aurelien and Parascandolo, Giambattista},
  booktitle={International Conference on Machine Learning},
  pages={936--945},
  year={2021},
  organization={Pmlr}
}

@article{grayeli2024symbolic,
  title={Symbolic regression with a learned concept library},
  author={Grayeli, Arya and Sehgal, Atharva and Costilla Reyes, Omar and Cranmer, Miles and Chaudhuri, Swarat},
  journal={Advances in Neural Information Processing Systems},
  volume={37},
  pages={44678--44709},
  year={2024}
}

@article{shojaee2024llm,
  title={Llm-sr: Scientific equation discovery via programming with large language models},
  author={Shojaee, Parshin and Meidani, Kazem and Gupta, Shashank and Farimani, Amir Barati and Reddy, Chandan K},
  journal={arXiv preprint arXiv:2404.18400},
  year={2024}
}

@article{wang2025drsr,
  title={DrSR: LLM based Scientific Equation Discovery with Dual Reasoning from Data and Experience},
  author={Wang, Runxiang and Wang, Boxiao and Li, Kai and Zhang, Yifan and Cheng, Jian},
  journal={arXiv preprint arXiv:2506.04282},
  year={2025}
}

@article{ying2025neural,
  title={A neural symbolic model for space physics},
  author={Ying, Jie and Lin, Haowei and Yue, Chao and Chen, Yajie and Xiao, Chao and Shi, Quanqi and Liang, Yitao and Yau, Shing-Tung and Zhou, Yuan and Ma, Jianzhu},
  journal={Nature Machine Intelligence},
  pages={1--16},
  year={2025},
  publisher={Nature Publishing Group UK London}
}

@article{koza1994genetic,
  title={Genetic programming as a means for programming computers by natural selection},
  author={Koza, John R},
  journal={Statistics and computing},
  volume={4},
  number={2},
  pages={87--112},
  year={1994},
  publisher={Springer}
}

@article{makke2024interpretable,
  title={Interpretable scientific discovery with symbolic regression: a review},
  author={Makke, Nour and Chawla, Sanjay},
  journal={Artificial Intelligence Review},
  volume={57},
  number={1},
  pages={2},
  year={2024},
  publisher={Springer}
}

@article{shojaee2025llm,
  title={LLM-SRBench: A New Benchmark for Scientific Equation Discovery with Large Language Models},
  author={Shojaee, Parshin and Nguyen, Ngoc-Hieu and Meidani, Kazem and Farimani, Amir Barati and Doan, Khoa D and Reddy, Chandan K},
  journal={arXiv preprint arXiv:2504.10415},
  year={2025}
}

@book{mohri2018foundations,
  title={Foundations of machine learning},
  author={Mohri, Mehryar and Rostamizadeh, Afshin and Talwalkar, Ameet},
  year={2018},
  publisher={MIT press}
}

@inproceedings{kocsis2006bandit,
  title={Bandit based monte-carlo planning},
  author={Kocsis, Levente and Szepesv{\'a}ri, Csaba},
  booktitle={European conference on machine learning},
  pages={282--293},
  year={2006},
  organization={Springer}
}

@article{brown2020language,
  title={Language models are few-shot learners},
  author={Brown, Tom and Mann, Benjamin and Ryder, Nick and Subbiah, Melanie and Kaplan, Jared D and Dhariwal, Prafulla and Neelakantan, Arvind and Shyam, Pranav and Sastry, Girish and Askell, Amanda and others},
  journal={Advances in neural information processing systems},
  volume={33},
  pages={1877--1901},
  year={2020}
}

@article{arjovsky2019invariant,
  title={Invariant risk minimization},
  author={Arjovsky, Martin and Bottou, L{\'e}on and Gulrajani, Ishaan and Lopez-Paz, David},
  journal={arXiv preprint arXiv:1907.02893},
  year={2019}
}

@article{karniadakis2021physics,
  title={Physics-informed machine learning},
  author={Karniadakis, George Em and Kevrekidis, Ioannis G and Lu, Lu and Perdikaris, Paris and Wang, Sifan and Yang, Liu},
  journal={Nature Reviews Physics},
  volume={3},
  number={6},
  pages={422--440},
  year={2021},
  publisher={Nature Publishing Group UK London}
}

@article{schmidt2009distilling,
  title={Distilling free-form natural laws from experimental data},
  author={Schmidt, Michael and Lipson, Hod},
  journal={science},
  volume={324},
  number={5923},
  pages={81--85},
  year={2009},
  publisher={American Association for the Advancement of Science}
}

@article{udrescu2020ai,
  title={AI Feynman: A physics-inspired method for symbolic regression},
  author={Udrescu, Silviu-Marian and Tegmark, Max},
  journal={Science advances},
  volume={6},
  number={16},
  pages={eaay2631},
  year={2020},
  publisher={American Association for the Advancement of Science}
}

@article{wang2019symbolic,
  title={Symbolic regression in materials science},
  author={Wang, Yiqun and Wagner, Nicholas and Rondinelli, James M},
  journal={MRS communications},
  volume={9},
  number={3},
  pages={793--805},
  year={2019},
  publisher={Cambridge University Press}
}

@article{cranmer2020discovering,
  title={Discovering symbolic models from deep learning with inductive biases},
  author={Cranmer, Miles and Sanchez Gonzalez, Alvaro and Battaglia, Peter and Xu, Rui and Cranmer, Kyle and Spergel, David and Ho, Shirley},
  journal={Advances in neural information processing systems},
  volume={33},
  pages={17429--17442},
  year={2020}
}

@article{vastl2024symformer,
  title={Symformer: End-to-end symbolic regression using transformer-based architecture},
  author={Vastl, Martin and Kulh{\'a}nek, Jon{\'a}{\v{s}} and Kubal{\'\i}k, Ji{\v{r}}{\'\i} and Derner, Erik and Babu{\v{s}}ka, Robert},
  journal={IEEE Access},
  volume={12},
  pages={37840--37849},
  year={2024},
  publisher={IEEE}
}

@article{romera2024mathematical,
  title={Mathematical discoveries from program search with large language models},
  author={Romera-Paredes, Bernardino and Barekatain, Mohammadamin and Novikov, Alexander and Balog, Matej and Kumar, M Pawan and Dupont, Emilien and Ruiz, Francisco JR and Ellenberg, Jordan S and Wang, Pengming and Fawzi, Omar and others},
  journal={Nature},
  volume={625},
  number={7995},
  pages={468--475},
  year={2024},
  publisher={Nature Publishing Group UK London}
}

@article{ji2023survey,
  title={Survey of hallucination in natural language generation},
  author={Ji, Ziwei and Lee, Nayeon and Frieske, Rita and Yu, Tiezheng and Su, Dan and Xu, Yan and Ishii, Etsuko and Bang, Ye Jin and Madotto, Andrea and Fung, Pascale},
  journal={ACM computing surveys},
  volume={55},
  number={12},
  pages={1--38},
  year={2023},
  publisher={ACM New York, NY}
}

@article{chen2021evaluating,
  title={Evaluating large language models trained on code},
  author={Chen, Mark},
  journal={arXiv preprint arXiv:2107.03374},
  year={2021}
}

@article{dubey2024llama,
  title={The llama 3 herd of models},
  author={Dubey, Abhimanyu and Jauhri, Abhinav and Pandey, Abhinav and Kadian, Abhishek and Al-Dahle, Ahmad and Letman, Aiesha and Mathur, Akhil and Schelten, Alan and Yang, Amy and Fan, Angela and others},
  journal={arXiv preprint arXiv:2407.21783},
  year={2024}
}

@misc{openai_gpt4o_mini,
  title       = {GPT‑4o mini: advancing cost‑efficient intelligence},
  author      = {{OpenAI}},
  year        = {2024},
  howpublished={\url{https://openai.com/is-IS/index/gpt-4o-mini-advancing-cost-efficient-intelligence/}}
}

@article{de2024ai,
  title={AI-Lorenz: A physics-data-driven framework for Black-Box and Gray-Box identification of chaotic systems with symbolic regression},
  author={De Florio, Mario and Kevrekidis, Ioannis G and Karniadakis, George Em},
  journal={Chaos, Solitons \& Fractals},
  volume={188},
  pages={115538},
  year={2024},
  publisher={Elsevier}
}

@article{achiam2023gpt,
  title={Gpt-4 technical report},
  author={Achiam, Josh and Adler, Steven and Agarwal, Sandhini and Ahmad, Lama and Akkaya, Ilge and Aleman, Florencia Leoni and Almeida, Diogo and Altenschmidt, Janko and Altman, Sam and Anadkat, Shyamal and others},
  journal={arXiv preprint arXiv:2303.08774},
  year={2023}
}

@article{team2023gemini,
  title={Gemini: a family of highly capable multimodal models},
  author={Team, Gemini and Anil, Rohan and Borgeaud, Sebastian and Alayrac, Jean-Baptiste and Yu, Jiahui and Soricut, Radu and Schalkwyk, Johan and Dai, Andrew M and Hauth, Anja and Millican, Katie and others},
  journal={arXiv preprint arXiv:2312.11805},
  year={2023}
}

@article{whitley1999island,
  title={The island model genetic algorithm: On separability, population size and convergence},
  author={Whitley, Darrell and Rana, Soraya and Heckendorn, Robert B},
  journal={Journal of computing and information technology},
  volume={7},
  number={1},
  pages={33--47},
  year={1999},
  publisher={Sveu{\v{c}}ili{\v{s}}te u Zagrebu Sveu{\v{c}}ili{\v{s}}ni ra{\v{c}}unski centar}
}

@article{byrd1995limited,
  title={A limited memory algorithm for bound constrained optimization},
  author={Byrd, Richard H and Lu, Peihuang and Nocedal, Jorge and Zhu, Ciyou},
  journal={SIAM Journal on scientific computing},
  volume={16},
  number={5},
  pages={1190--1208},
  year={1995},
  publisher={SIAM}
}

@article{uc2023survey,
  title={Survey on reinforcement learning for language processing},
  author={Uc-Cetina, Victor and Navarro-Guerrero, Nicol{\'a}s and Martin-Gonzalez, Anabel and Weber, Cornelius and Wermter, Stefan},
  journal={Artificial Intelligence Review},
  volume={56},
  number={2},
  pages={1543--1575},
  year={2023},
  publisher={Springer}
}

@article{talagrand1996new,
  title={New concentration inequalities in product spaces},
  author={Talagrand, Michel},
  journal={Inventiones mathematicae},
  volume={126},
  number={3},
  pages={505--563},
  year={1996},
  publisher={Springer}
}

@article{bartlett2002rademacher,
  title={Rademacher and gaussian complexities: Risk bounds and structural results},
  author={Bartlett, Peter L and Mendelson, Shahar},
  journal={Journal of machine learning research},
  volume={3},
  number={Nov},
  pages={463--482},
  year={2002}
}

@article{mcdiarmid1989method,
  title={On the method of bounded differences},
  author={McDiarmid, Colin and others},
  journal={Surveys in combinatorics},
  volume={141},
  number={1},
  pages={148--188},
  year={1989},
  publisher={Norwich}
}

\newpage
\appendix
\onecolumn

\section{Theoretical Analysis}
\label{sec:appendix_proofs}
In this appendix, we provide detailed proofs for the theoretical results presented in Section~\ref{sec:theory}, based on the statistical learning theory framework~\cite{mohri2018foundations}. Let $\mathcal{P}$ denote an unknown joint distribution over $\mathcal{X} \times \mathcal{Y}$. The training dataset $\mathcal{D} = \{(x_i, y_i)\}_{i=1}^{N}$ consists of $N$ i.i.d.\ samples drawn from $\mathcal{P}$.
\subsection{Preliminaries and Assumptions}

To ensure the theoretical validity of the generalization bounds in the context of symbolic regression, we adopt the following standard regularity assumption regarding the hypothesis space and loss function.

\begin{assumption}[Boundedness] \label{assump:boundedness}
Let $\mathcal{X} \subset \mathbb{R}^d$ be the compact input domain and $\mathcal{Y} \subset \mathbb{R}$ be the bounded output space. We assume that the hypothesis space $\mathcal{H}$ consists of functions that are bounded within the domain. Specifically, there exists a constant $B > 0$ such that for any candidate function $f \in \mathcal{H}$ and any $x \in \mathcal{X}$, $|f(x)| \le B$. Consequently, assuming the target $y$ is also bounded, the loss function $\ell(f(x), y)$ is bounded by a constant $M < \infty$ and is Lipschitz continuous with respect to the prediction.
\end{assumption}

\textit{Remark:} In practice, this assumption is enforced by detecting and rejecting candidate equations that exhibit singularities or numerical overflow within the domain of interest, or by applying a truncation operator to the function outputs.

\subsection{Proof of Lemma 3.2 (Generalization Bound)}
\label{sec:proof_thm_3_2}

\textbf{Lemma 3.2 (Generalization Bound).} 
\textit{Let $\mathcal{H}$ be the hypothesis space. 
Assume the loss function $\ell$ is $\lambda$-Lipschitz with respect to its first argument and bounded by $M$. 
For any $\delta > 0$, with probability at least $1-\delta$, the generalization error of any $f \in \mathcal{H}$ is bounded by:}
\begin{equation}
    R(f) \le \hat{R}_N(f) 
    + 2\lambda \Rad_N(\mathcal{H}) 
    + M\sqrt{\frac{\log(1/\delta)}{2N}}.
\end{equation}

\begin{proof}
The proof follows the standard symmetrization argument in statistical learning theory~\cite{mohri2018foundations}. To ensure the hypothesis space $\mathcal{H}$ is well-defined and has finite Rademacher complexity, we consider the set of symbolic expression trees with a maximum depth $L$.

Let $\Phi(\mathcal{D}) = \sup_{f \in \mathcal{H}} (R(f) - \hat{R}_N(f))$. We aim to bound $\Phi(\mathcal{D})$. Since the loss function is bounded by $M$, changing one example $(x_i, y_i)$ to $(x_i', y_i')$ changes $\Phi(\mathcal{D})$ by at most $M/N$. By McDiarmid's inequality~\cite{mcdiarmid1989method}, for any $\delta > 0$, with probability at least $1-\delta$:
\begin{equation}
\Phi(\mathcal{D}) \le \mathbb{E}_{\mathcal{D}}[\Phi(\mathcal{D})] + M\sqrt{\frac{\log(1/\delta)}{2N}}.
\end{equation}

Next, we bound the expectation $\mathbb{E}_{\mathcal{D}}[\Phi(\mathcal{D})]$. We introduce a ghost dataset $\mathcal{D}' = \{(x_i', y_i')\}_{i=1}^N$ drawn from the same distribution $\mathcal{P}$:
\begin{align}
    \mathbb{E}_{\mathcal{D}}[\Phi(\mathcal{D})] &= \mathbb{E}_{\mathcal{D}} \left[ \sup_{f \in \mathcal{H}} \left( \mathbb{E}_{\mathcal{D}'}[\hat{R}_{N}'(f)] - \hat{R}_N(f) \right) \right] \\
    &\le \mathbb{E}_{\mathcal{D}, \mathcal{D}'} \left[ \sup_{f \in \mathcal{H}} (\hat{R}_{N}'(f) - \hat{R}_N(f)) \right] \quad (\text{by Jensen's inequality}) \\
    &= \mathbb{E}_{\mathcal{D}, \mathcal{D}'} \left[ \sup_{f \in \mathcal{H}} \frac{1}{N} \sum_{i=1}^N (\ell(f(x_i'), y_i') - \ell(f(x_i), y_i)) \right].
\end{align}

Since $\mathcal{D}$ and $\mathcal{D}'$ are i.i.d., introducing Rademacher variables $\boldsymbol{\sigma} = (\sigma_1, \dots, \sigma_N)$ where $\sigma_i \in \{-1, +1\}$ uniformly does not change the distribution. Thus:
\begin{align}
    \mathbb{E}_{\mathcal{D}}[\Phi(\mathcal{D})] &\le \mathbb{E}_{\mathcal{D}, \mathcal{D}', \boldsymbol{\sigma}} \left[ \sup_{f \in \mathcal{H}} \frac{1}{N} \sum_{i=1}^N \sigma_i (\ell(f(x_i), y_i) - \ell(f(x_i', y_i')) ) \right] \\
    &\le 2 \mathbb{E}_{\mathcal{D}, \boldsymbol{\sigma}} \left[ \sup_{f \in \mathcal{H}} \frac{1}{N} \sum_{i=1}^N \sigma_i \ell(f(x_i), y_i) \right] \\
    &= 2 \Rad_N(\ell \circ \mathcal{H}).
\end{align}

Assuming the loss function $\ell$ is $\lambda$-Lipschitz (as implied by Assumption \ref{assump:boundedness}), by Talagrand's contraction lemma~\cite{talagrand1996new,bartlett2002rademacher}, $\Rad_N(\ell \circ \mathcal{H}) \le \lambda \Rad_N(\mathcal{H})$. Substituting this back completes the proof.
\end{proof}
\subsection{Proof of Proposition 3.5 (Complexity Reduction)}
\label{sec:proof_prop_3_5}

\textbf{Proposition 3.5.} \textit{Let $\mathcal{C}$ be a set of restrictive prior constraints satisfying $f_{\text{true}} \in \mathcal{H}_{\mathcal{C}}$. The Rademacher complexity satisfies $\Rad_N(\mathcal{H}_{\mathcal{C}}) \le \Rad_N(\mathcal{H})$.}

\begin{proof}
By Definition 3.4, the constrained subspace is $\mathcal{H}_{\mathcal{C}} = \{f \in \mathcal{H} \mid V(f, \mathcal{C})=1\}$, implying $\mathcal{H}_{\mathcal{C}} \subseteq \mathcal{H}$.

The empirical Rademacher complexity is derived as follows:
\begin{align}
    \Rad_N(\mathcal{H}_{\mathcal{C}}) &= \mathbb{E}_{\boldsymbol{\sigma}} \left[ \sup_{f \in \mathcal{H}_{\mathcal{C}}} \frac{1}{N} \sum_{i=1}^N \sigma_i f(x_i) \right] \label{eq:rad_def} \\
    &\le \mathbb{E}_{\boldsymbol{\sigma}} \left[ \sup_{f \in \mathcal{H}} \frac{1}{N} \sum_{i=1}^N \sigma_i f(x_i) \right] \label{eq:rad_ineq} \\
    &= \Rad_N(\mathcal{H}). \nonumber
\end{align}
\end{proof}

\textbf{Remark.} While the theoretical inequality is non-strict ($\le$) without stronger assumptions on $\mathcal{C}$, a strict reduction ($\Rad_N(\mathcal{H}_{\mathcal{C}}) < \Rad_N(\mathcal{H})$) is expected in practice. Consider the set of pseudo-equations $\mathcal{Q} = \mathcal{H} \setminus \mathcal{H}_{\mathcal{C}}$. Functions in $\mathcal{Q}$ typically possess high capacity  and can flip signs to match the random noise $\boldsymbol{\sigma}$. Since $\mathcal{C}$ explicitly prunes these candidates, the effective search space complexity is strictly reduced.

\section{Mechanism Analysis for PACE}
\label{app:pace}

To understand the effectiveness of the PACE strategy, we analyze its impact on the search landscape continuity and the dynamic trade-off between exploration and constraint satisfaction.

\subsection{Gradient Preservation in Search Landscape}
Standard constrained optimization often employs a hard-filtering baseline, where invalid candidates ($f \notin \mathcal{H}_{\mathcal{C}}$) are assigned a worst-case score (e.g., $-\infty$) or discarded. This creates a discontinuous fitness landscape with zero gradients in the prior-violating regions, causing the search to stall if the evolutionary path requires traversing scientifically inconsistent intermediate states. Here, the term gradient is used only to describe directional signals in the score-based search, rather than gradients of a differentiable objective.
In contrast, PACE maintains informative selection gradients during the exploration phase ($t < 1$). Recall the total score for invalid candidates: $S(f, t) = \sigma(t) S_{\text{base}}(f) - \delta(t)$. Since $\sigma(t) > 0$ and $\delta(t)$ are time-dependent scalars constant across the population at any generation $t$, the scoring function is an affine transformation of the baseline reward. Consequently, for any two invalid candidates $f_1, f_2$, the ranking order is preserved:
\begin{equation}
S(f_1, t) > S(f_2, t) \iff S_{\text{base}}(f_1) > S_{\text{base}}(f_2).
\end{equation}
This linearity ensures that the optimizer receives informative feedback driven by numerical accuracy even within the invalid subspace. By eliminating the zero-gradient plateaus typical of hard-filtering, PACE allows the algorithm to utilize high-performing but inconsistent structures as evolutionary stepping stones toward valid solutions.

\subsection{Dynamic Exploration-Constraint Trade-off}
To visually illustrate how scientific priors are progressively enforced, Figure~\ref{FIG:8} depicts the dynamic reward boundaries induced by the PACE mechanism. Mathematically, the time-varying parameters $\sigma(t)$ and $\delta(t)$ implement a smooth homotopy from unconstrained exploration to strict constraint enforcement. This creates a mutable failure region for invalid candidates while maintaining a fixed success region for valid ones, effectively managing two distinct risks:

\begin{itemize}
    \item \textbf{Exploration Phase (Low $t$):} 
    With $\phi(t) \approx 0$, the penalty $\delta(t)$ is negligible and the reward width $\sigma(t) \approx 1$. In this stage, the failure region is broad, mimicking the unconstrained baseline. This configuration minimizes the \textit{Search Failure Risk} by preventing the premature rejection of promising but scientifically inconsistent candidates, allowing the search beam to encompass diverse structural prototypes.
    
    \item \textbf{Enforcement Phase (High $t$):} 
    As $t \to 1$, the curriculum schedule (controlled by $\texttt{exp\_base}=60$) triggers a sharp increase in penalty $\delta(t)$ and a collapse in reward variance $\sigma(t)$. This effectively tightens the failure boundaries, creating a steep potential barrier against invalid candidates. This regime minimizes the Generalization Risk by compelling the population to converge into the scientifically consistent subspace $\mathcal{H}_{\mathcal{C}}$.
\end{itemize}

This dynamic annealing mechanism does not impose strict constraints at the early stage of the search. Instead, it allows the model to explore the hypothesis space freely and gradually strengthens the prior constraints in later stages. This design effectively avoids the pseudo-equation trap, where models achieve good numerical fit but violate scientific principles, thereby ensuring that the final discovered equations strike a principled balance between numerical accuracy and scientific consistency.

\begin{figure*}[t]
    \centering
    \includegraphics[width=0.85\textwidth]{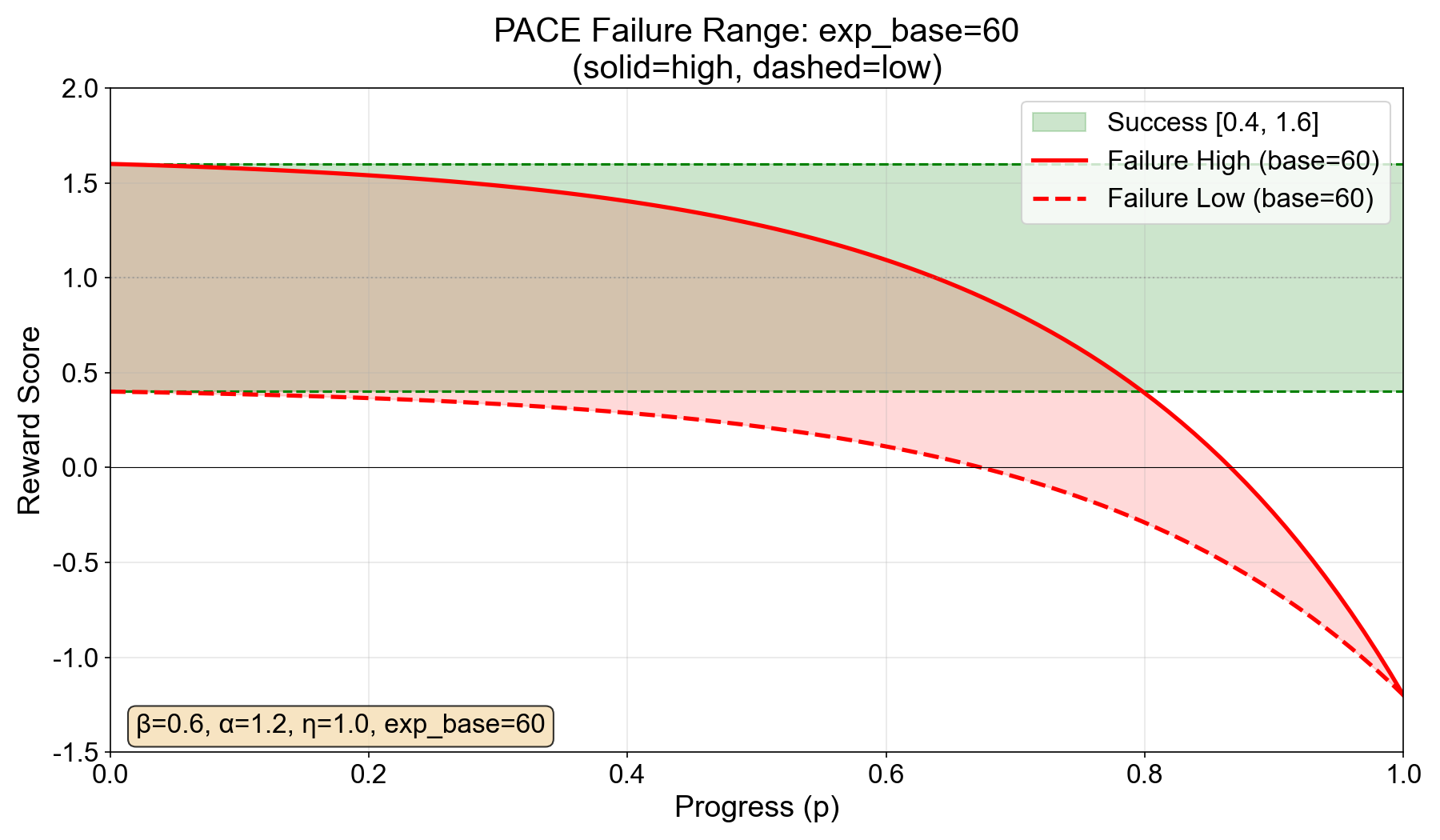}
    \caption{Dynamic score boundaries induced by the PACE (Prior-Annealing Constraint Evaluation) mechanism.}
    \label{FIG:8}
\end{figure*}

\newpage
\section{Details on Datasets}\label{app:dataset}

To simulate the real-world discovery process and prevent the risk of LLM recitation, our benchmarks are drawn from LLMSR-Bench~\cite{shojaee2025llm}.

\begin{itemize}
        \item \textbf{E. coli Growth}: 
    This dataset describes the growth dynamics of \textit{Escherichia coli} as a function of substrate concentration ($S$), temperature ($T$), and pH. The model adopts a multiplicative structure,
    \[
    \frac{dB}{dt} = f_B(B)\cdot f_S(S)\cdot f_T(T)\cdot f_{pH}(\mathrm{pH}),
    \]
    which captures the combined influence of biological capacity and environmental factors.  To avoid trivial recapitulation of standard biological growth models, we introduce novel nonlinear formulations for environmental dependencies. The explicit growth equation is given by
    \[
    \frac{dB}{dt} = \mu_{\max} B \left(\frac{S}{K_s + S}\right) \frac{\tanh\!\big(k(T - x_0)\big)}{1 + c(T - x_{\mathrm{decay}})^4} \exp\!\left(-\lvert \mathrm{pH} - \mathrm{pH}_{\mathrm{opt}} \rvert\right) \sin^2\!\left(\frac{(\mathrm{pH} - \mathrm{pH}_{\min})\pi}{\mathrm{pH}_{\max} - \mathrm{pH}_{\min}}\right).
    \]

    \item \textbf{Material Stress-Strain (Real-World Experimental Data)}: Unlike synthetic benchmarks, this dataset comprises actual experimental data from tensile tests of Aluminum 6061-T651 across a range of temperatures from $20^\circ\text{C}$ to $300^\circ\text{C}$. This problem challenges the model to recover empirical relationships from noisy, real-world observations where no universally accepted theoretical closed-form expression exists. The curves exhibit complex non-linearity, including distinct elastic, plastic, and failure regions that vary significantly with temperature. To rigorously test the out-of-domain generalization capabilities on real-world data, we allocate the data corresponding to $T=200^\circ\text{C}$ specifically for use as the out-of-domain validation set, withholding it entirely from the training process.

    \item \textbf{Chemical Reaction Kinetics (CRK)}: This benchmark describes how the concentration of a chemical species evolves over time. It incorporates standard exponential decay behavior alongside additional nonlinear saturation effects to evaluate model robustness in capturing governing equations for chemical decay and reaction rates.
    \begin{equation}
        \frac{dA}{dt} = -0.1899 \cdot A(t)^2 + \frac{0.4598\cdot A(t)^2}{0.7498 \cdot A(t)^4 + 1}
    \end{equation}
    
    where $A(t)$ denotes concentration at time $t$. This equation incorporates a second-order decay term alongside a synthetic nonlinear saturation term, evaluating symbolic reasoning in data-driven kinetic modeling.
    \item \textbf{Nonlinear Oscillators}: These systems follow the general differential form $\dot{x} + f(t, x, \dot{x}) = 0$, describing the complex interplay between an oscillator's position, velocity, and forces. Both datasets share the same time range $(0, 50)$ and initial values $\{x=0.5, v=0.5\}$. To effectively evaluate the generalization capability of the discovered equations, we employ a strategic data partitioning scheme where the simulation data is divided into training, in-domain validation, and out-of-domain validation sets based on the trajectory time. Specifically, the time interval $T=[0, 20)$ is utilized to evaluate out-of-domain generalization.
    \begin{itemize}
        \item \textbf{Oscillator 1}: A time-independent system governed by the specific equation:
        \[ \dot{v} = 0.8 \sin(x) - 0.5 v^3 - 0.2 x^3 - 0.5 x v - x \cos(x) \]
        It combines trigonometric, polynomial, and mixed terms to test the recovery of complex nonlinear interactions without temporal forcing.
        \item \textbf{Oscillator 2}: A time-dependent system governed by the specific equation:
        \[ \dot{v} = 0.3 \sin(t) - 0.5 v^3 - x v - 5.0 x \exp(0.5 x) \]
        This introduces explicit temporal forcing and exponential nonlinearities to challenge the model's ability to handle time-varying dynamics.
    \end{itemize}
\end{itemize}
\section{Summary of Prior Constraints}
\label{app:constraints_table}
\label{app:priors}

Table~\ref{tab:prior_constraints} lists the explicit prior constraints used in our method. As shown in Figure~\ref{fig:prior_constraints_summary}, these constraints are constructed using an LLM-assisted workflow: general scientific principles suggested by LLMs are checked and adjusted based on analyses of the training data. This process converts domain knowledge into executable constraint programs. For example:
\begin{itemize}[leftmargin=*, topsep=2pt, itemsep=3pt]

    \item \textbf{E. coli Growth:} Fundamental biological constraints, including near-zero growth at vanishing population density and near-zero growth under lethal environmental conditions (e.g., extreme temperature or pH), are observed in the training data. Additionally, unimodal responses to environmental factors coincide with the density truncation seen in the dataset.

    \item \textbf{Stress--Strain:} Due to noise, constraints are not imposed pointwise but derived from statistical trends in the data combined with established mechanical knowledge of Al 6061-T651. Specifically, these constraints include monotonic stress growth in the elastic and early plastic regimes, retention of load-carrying capacity at large deformation, observable thermal softening at elevated temperatures, and a bounded near-zero stress level at small strain.
    
    \item \textbf{CRK:} Reaction dynamics, including vanishing reaction rates at the equilibrium state, are consistently observed in the data. Moreover, the equilibrium point itself is identifiable from the concentration trajectories, which monotonically converge toward a steady state, providing strong empirical support for equilibrium consistency and global stability.

    \item \textbf{Oscillator-1:} Restoring forces toward equilibrium and velocity-opposing damping follow classical physical intuition and are supported by symmetric trajectories and monotonic amplitude decay. Phase portraits and joint state statistics further reveal clear departures from linear dynamics, while bounded trajectories are consistent with the finite range of the observed state distribution.

    \item \textbf{Oscillator-2:} Asymmetric restoring behavior is evident from the response statistics, which distinguish between positive and negative displacements. Furthermore, stable periodic structures in both time series and phase portraits confirm explicit periodic driving. Nonlinearity is substantiated by curvature and state-dependent distortions in phase space, alongside higher-order statistical dependencies in the data.
\end{itemize}

\begin{table}[htbp]
\centering
\caption{Summary of prior constraints for each  problem.}
\label{tab:prior_constraints}
\small
\renewcommand{\arraystretch}{1.4} 

\begin{tabular}{@{}>{\centering\arraybackslash}m{3.2cm} m{11.5cm}@{}}
\toprule
\textbf{Problem} & \textbf{Prior Constraints} \\
\midrule

\raisebox{1.2em}{%
    \begin{minipage}{3.2cm}
    \centering
    \textbf{E. coli Growth} 
    \end{minipage}%
} &
\begin{itemize}[leftmargin=*, nosep]
    \item Multivariate dynamics: $db = f(b, s, T, \text{pH})$
    \item Biological causality: $db = 0$ when $b = 0$
    \item Viability boundaries: $db \le 0$ at lethal conditions (eg. extreme Temperature or pH)
    \item Unimodal response to temperature and pH
    \item Asymmetric temperature response (sharper decay at high $T$)
\end{itemize} \\

\midrule

\raisebox{1.2em}{%
    \begin{minipage}{3.2cm}
    \centering
    \textbf{Stress--Strain}
    \end{minipage}%
} &
\begin{itemize}[leftmargin=*, nosep]
    \item Thermo-mechanical coupling: $\sigma = f(\varepsilon, T)$
    \item Near-zero stress behavior in the small-strain regime
    \item Monotonic stress growth across elastic and early plastic regimes with work hardening
    \item Statistically observable thermal softening at elevated temperatures
    \item Bounded and numerically stable model outputs (no NaN/Inf)
\end{itemize} \\

\midrule

\raisebox{1.2em}{%
    \begin{minipage}{3.2cm}
    \centering
    \textbf{CRK} 
    \end{minipage}%
} &
\begin{itemize}[leftmargin=*, nosep]
    \item Reaction dynamics system: $dA/dt = f(A,t)$
    \item Equilibrium consistency at $A_{\mathrm{eq}}$
    \item Global stability toward equilibrium
    \item Non-negativity constraint: $dA/dt >= 0$ at $A = 0$
    \item Mandatory nonlinearity
\end{itemize} \\
\midrule
\raisebox{1.2em}{%
    \begin{minipage}{3.2cm}
    \centering
    \textbf{Oscillator-1} 
    \end{minipage}%
} &
\begin{itemize}[leftmargin=*, nosep]
    \item State-dependent dynamics: $a = f(x, v)$, no explicit time dependence
    \item Restoring mechanism toward equilibrium 
    \item Dissipative damping that opposes velocity
    \item Nonlinearity requirement
    \item Bounded trajectories under typical initial conditions
\end{itemize} \\

\midrule

\raisebox{1.2em}{%
    \begin{minipage}{3.2cm}
    \centering
    \textbf{Oscillator-2} 
    \end{minipage}%
} &
\begin{itemize}[leftmargin=*, nosep]
    \item Non-autonomous dynamics: $a = f(t, x, v)$
    \item Restoring mechanism toward equilibrium and asymmetric restoring force
    \item Dissipative damping opposing velocity
    \item Bounded time-dependent driving term
    \item Nonlinearity requirement
    \item Bounded trajectories under typical initial conditions
\end{itemize} \\

\bottomrule
\end{tabular}
\end{table}

\begin{figure}
    \centering
    \includegraphics[width=0.8\textwidth]{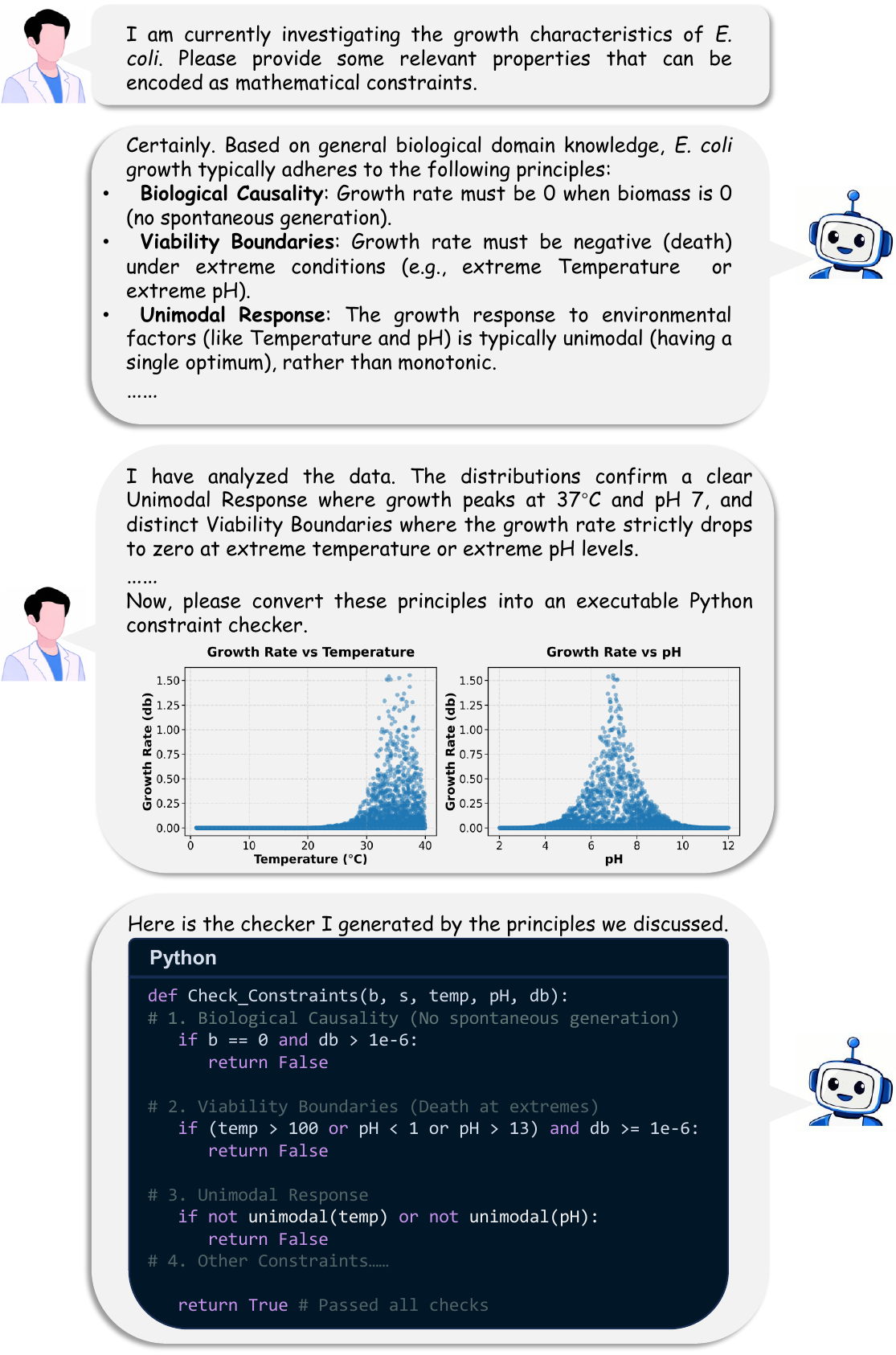}
\caption{Illustration of the prior constraint checker in PG-SR, using the \textit{E. coli Growth} task as an example to encode domain priors as executable constraint programs.}
    \label{fig:prior_constraints_summary}
\end{figure}

\section{Details on Baselines}
\label{app:baselines}

We compare PG-SR against three distinct categories of state-of-the-art symbolic regression approaches. The underlying principles and specific implementation settings for each baseline are detailed as follows:

\subsection{Search-based approaches}
These approaches formulate symbolic regression as a combinatorial optimization problem, searching for the optimal expression structure through evolutionary algorithms or reinforcement learning.

\begin{itemize}
    \item \textbf{GPLearn\cite{mccormick2019gplearn}}: A standard implementation of Genetic Programming (GP) that evolves a population of expression trees using genetic operators such as crossover, mutation, and reproduction. 
    \begin{itemize}
        \item \textit{Settings}: Population size is set to 1000, with 2000 generations and a tournament selection size of 20.
        \item \textit{Function set}: The function set consists of 
        \{\texttt{add}, \texttt{sub}, \texttt{mul}, \texttt{div}, 
        \texttt{sqrt}, \texttt{log}, \texttt{exp}, \texttt{sin}, \texttt{cos}, \texttt{abs}, \texttt{neg}, \texttt{inv}\}.
        \item \textit{Other settings}: All other hyperparameters follow the default settings of the original implementation.
    \end{itemize}

    \item \textbf{PySR~\cite{cranmer2023interpretable}}: An advanced SR method that employs asynchronous multi-island GP-based evolutions.
    \begin{itemize}
        \item \textit{Settings}: Population size is set to 50, with 2000 generations and a tournament selection size of 10. 
        \item \textit{Function set}: The function set consists of 
        \{\texttt{add}, \texttt{sub}, \texttt{mul}, \texttt{div}, 
        \texttt{sqrt}, \texttt{log}, \texttt{exp}, \texttt{sin}, \texttt{cos}, \texttt{abs}, \texttt{neg}, \texttt{inv}\}.
        \item \textit{Other settings}: All other hyperparameters follow the default settings of the original implementation.
    \end{itemize}
    
    \item \textbf{DSR (Deep Symbolic Regression)~\cite{petersen2019deep}}: Employs an RNN-based policy to sample mathematical expressions in the form of pre-order traversal sequences, optimized via a risk-seeking policy gradient.
    \begin{itemize}
        \item \textit{Settings}: The number of samples is set to 200{,}000, with a batch size of 100 and an exploration parameter $\epsilon = 0.05$.
        \item \textit{Function set}: The function set consists of 
        \{\texttt{add}, \texttt{sub}, \texttt{mul}, \texttt{div}, 
        \texttt{sqrt}, \texttt{log}, \texttt{exp}, \texttt{sin}, \texttt{cos}, \texttt{abs}, \texttt{neg}, \texttt{inv}\}.
        \item \textit{Other settings}: All other hyperparameters follow the default settings of the original implementation.
    \end{itemize}

    \item \textbf{uDSR~\cite{landajuela2022unified}}: A hybrid framework that unifies deep reinforcement learning with genetic programming, combining the global exploration capability of RL with the local search efficiency of GP.
    \begin{itemize}
        \item \textit{Settings}: The number of samples is set to 200{,}000, with a batch size of 100 and an exploration parameter $\epsilon = 0.05$.
        \item \textit{Prior}: A length prior is enabled (\texttt{on = true}), with the maximum expression length set to 20.
        \item \textit{Function set}: The function set consists of 
        \{\texttt{add}, \texttt{sub}, \texttt{mul}, \texttt{div}, 
        \texttt{sqrt}, \texttt{log}, \texttt{exp}, \texttt{sin}, \texttt{cos}, \texttt{abs}, \texttt{neg}, \texttt{inv}\}.
        \item \textit{Polynomial optimizer}: The polynomial optimizer parameters are set to a maximum degree of 2 and a coefficient tolerance of $1\times10^{-5}$.
        \item \textit{Other settings}: All other hyperparameters follow the default settings of the original implementation.
    \end{itemize}
\item \textbf{Operon~\cite{burlacu2020operon}}: A highly optimized C++ framework that uses a diverse set of selection and mutation schemes.
\begin{itemize}
    \item \textit{Settings}: All hyperparameters follow the default settings of the original implementation.
\end{itemize}

\item \textbf{GP-GOMEA~\cite{virgolin2021improving}}: A model-based GP approach that identifies linkage groups between variables to effectively preserve and combine useful sub-expressions.
\begin{itemize}
    \item \textit{Settings}: All hyperparameters follow the default settings of the original implementation.
\end{itemize}

\end{itemize}
\subsection{Transformer-based approaches}
These approaches enable end-to-end equation generation by leveraging large-scale pretraining on massive synthetic datasets.

\begin{itemize}
    \item \textbf{TPSR (Transformer-based Planning for Symbolic Regression)~\cite{shojaee2023transformer}}: Integrates Monte Carlo Tree Search into the decoding process of a pretrained Transformer, allowing the model to iteratively plan and refine the structure of symbolic expressions.
    \begin{itemize}
        \item \textit{Backbone}: A pretrained E2E Transformer model is used as the backbone network.
        \item \textit{Settings}: All other hyperparameters follow the default configuration of the original implementation.
    \end{itemize}

    \item \textbf{E2E (End-to-End Symbolic Regression)~\cite{kamienny2022end}}: A sequence-to-sequence pre-trained Transformer architecture for symbolic regression.
    \begin{itemize}
        \item \textit{Settings}: All other hyperparameters follow the default settings of the original implementation.
    \end{itemize}
    
    \item \textbf{NeSymReS~\cite{biggio2021neural}}: A pioneering pre-trained Transformer-based symbolic regression model.
    \begin{itemize}
        \item \textit{Model configuration}: Experiments are conducted using the 100M-parameter pretrained checkpoint.
        \item \textit{Other settings}:  All other hyperparameters follow the default settings of the original implementation.
    \end{itemize}
    
   \item \textbf{PhyE2E~\cite{ying2025neural}}: A recent sequence-to-sequence pre-trained Transformer model for symbolic regression.
    \begin{itemize}
        \item \textit{Settings}: All other hyperparameters follow the default settings of the original implementation.
    \end{itemize}
\end{itemize}

\subsection{LLM-based approaches}
These approaches leverage LLMs as program generators or evolutionary operators to guide the search for symbolic expressions.

\begin{itemize}
    \item \textbf{LaSR~\cite{grayeli2024symbolic}}: A Julia-based framework that integrates genetic algorithms with LLM.
    \begin{itemize}
        \item \textit{Settings}: All  hyperparameters follow the default configuration of the original implementation.
    \end{itemize}

    \item \textbf{LLM-SR~\cite{shojaee2024llm}}: An LLM-driven symbolic regression framework.
\begin{itemize}
    \item \textit{Settings}: All remaining hyperparameters follow the default configuration of the original implementation.
\end{itemize}

    \item \textbf{DrSR~\cite{wang2025drsr}}: An LLM-driven symbolic regression framework with dual reasoning mechanisms.
    \begin{itemize}
        \item \textit{Settings}: All  hyperparameters follow the default settings of the released codebase.
    \end{itemize}

\end{itemize}

\vspace{15cm}
\section{PG-SR Configuration}
\label{app:pg-config}
Table~\ref{tab:pgsr_full_config} presents the specific configurations.

\begin{table}[H]
\centering
\renewcommand{\arraystretch}{1.1}
\small
\caption{Complete Configuration Overview of PG-SR.}
\label{tab:pgsr_full_config}
\begin{tabular}{l c p{8cm}}
\toprule
\textbf{Parameter / Flag} & \textbf{Default} & \textbf{Description} \\
\midrule

\multicolumn{3}{c}{\textit{\textbf{Core Experiment Configuration}}} \\
\midrule
\texttt{num\_samplers} & 1 & Number of parallel samplers \\
\texttt{num\_evaluators} & 1 & Number of parallel evaluators \\
\texttt{samples\_per\_prompt} & 4 & Number of candidate hypotheses generated per prompt \\
\texttt{evaluate\_timeout\_seconds} & 30 & Timeout for evaluating a single hypothesis (seconds) \\
\texttt{api\_model} & gpt-4o-mini & LLM used for generation \\
\texttt{keep\_docstrings} & True & Whether to retain docstrings in extracted code \\
\texttt{max\_sample\_num} & 10,000 & Global maximum number of samples \\

\midrule
\multicolumn{3}{c}{\textit{\textbf{Experience Buffer Configuration}}} \\
\midrule
\texttt{functions\_per\_prompt} & 2 & Number of historical hypotheses included in each prompt \\
\texttt{num\_islands} & 10 & Number of islands (sub-populations) maintained in parallel \\
\texttt{reset\_period} & 4 hours & Period for resetting underperforming islands \\
\texttt{cluster\_sampling\_temp\_init} & 0.1 & Initial temperature for cluster-based sampling \\
\texttt{cluster\_sampling\_temp\_period} & 30,000 & Temperature decay period (in sampling steps) \\

\midrule
\multicolumn{3}{c}{\textit{\textbf{Prior Constraint \& PACE Configuration}}} \\
\midrule
\texttt{beta} & 0.6 & Success reward range $[1-\beta, 1+\beta]$ \\
\texttt{pace\_alpha} & 1.2 & Maximum downward shift for failure region \\
\texttt{pace\_eta} & 1.0 & Maximum shrinkage ratio for failure region \\
\texttt{pace\_exp\_base} & 60.0 & Exponential base controlling annealing speed \\
\texttt{n\_l-bfgs-b\_retries} & 10 & L-BFGS-B retries when physics check fails \\
\texttt{max\_repair\_rounds} & 3 & Maximum repair rounds per equation \\

\midrule
\multicolumn{3}{c}{\textit{\textbf{Warmup Stage Configuration}}} \\
\midrule
\texttt{warmup\_num\_skeletons} & 100 & Total number of skeletons requested during warmup \\
\texttt{warmup\_skeletons\_per\_call} & 5 & Number of skeletons per LLM call \\
\texttt{min\_physics\_passed\_warmup} & 10 & Minimum physics-passed skeletons required \\
\texttt{max\_warmup\_repair\_iterations} & 40 & Maximum repair iterations in warmup \\

\midrule
\multicolumn{3}{c}{\textit{\textbf{Refinement Stage Configuration}}} \\
\midrule
\texttt{refine\_start\_ratio} & 0.01 & Ratio of budget after which refinement starts \\
\texttt{refine\_interval} & 100 & Trigger refinement every N samples \\
\texttt{refine\_num\_skeletons} & 10 & Total number of skeletons requested per refinement \\
\texttt{refine\_skeletons\_per\_call} & 2 & Number of skeletons per LLM call \\
\texttt{max\_refine\_repair\_iterations} & 6 & Maximum repair iterations in refinement \\
\texttt{repair\_good\_examples\_num} & 3 & Max good examples shown in repair prompt \\
\texttt{repair\_history\_num} & 3 & Max repair history shown in repair prompt \\

\bottomrule
\end{tabular}
\end{table}

\newpage

\section{Details on Prompt Designs}
\label{app:prompts}

We provide the concrete prompt templates used in our experiments below.

\subsection{Data-Driven Warm-up Prompt}
\begin{promptbox}{Warmup Prompt}
\textbf{[Task: Data-Driven Scientific Hypothesis Generation]}

\textbf{Role}: You are a helpful assistant tasked with discovering mathematical function structures for \textbf{\{domain\}}. 

\textbf{Goal}: Based on the Analysis Report, Variable Definitions and the Hard Rules below, produce \textbf{EXACTLY \{num\_funcs\}} candidate \{equation\_desc\} (\{equation\_form\}).

\vspace{0.3em}
\textbf{[Variable Definitions]}

\{variable\_definitions\}

\vspace{0.3em}
\textbf{[Analysis Report]}

\{analysis\_report\}

\vspace{0.3em}
\textbf{[Hard Rules]}

\{hard\_rules\_block\}

\vspace{0.3em}
You must reason step-by-step. 
For each equation, provide a brief physical justification in the code comments (\# Reasoning: ...) BEFORE writing the formula.

\vspace{0.3em}
\textbf{[Output Format]}

\begin{lstlisting}[language=Python,basicstyle=\ttfamily]
import numpy as np
MAX_NPARAMS = 10
PARAMS_INIT = [1.0]*MAX_NPARAMS

def equation_v1({signature_vars}, params):
    '''
    Args:
        {signature_desc}
        params: Array of parameters [params[0], params[1], ...] 
                to be optimized.
    Returns:
        {return_var}: {return_desc}
    '''
    # Reasoning: ... 
    {return_var} = {example_formula1}
    return {return_var}
\end{lstlisting}
\end{promptbox}

\newpage


\subsection{Evolution Prompt}
\begin{promptbox}{Evolution Prompt}
\textbf{[System Message]}

You are a helpful assistant tasked with discovering mathematical function structures for scientific systems.

Complete the 'equation' function below, considering the physical meaning and relationships of inputs.

All parameters are constrained to be NON-NEGATIVE ($\geq$0). Write minus signs explicitly in the equation structure if needed.

Explain your reasoning briefly before completing the function. Let's think step by step.

\vspace{0.3em}
\textbf{[User Prompt - From Specification Template]}

\begin{lstlisting}[language=Python,basicstyle=\ttfamily\small]
"""
{problem_description_from_specification}

[Learnings from Recent Attempt]
The following are ideas summarized based on past experiences:
{reflection}

Use the above ideas and following equations to guide your equation 
improvement. Feel free to explore entirely different structures 
if they lead to better performance.
"""

import numpy as np

MAX_NPARAMS = 10
PARAMS_INIT = [1.0]*MAX_NPARAMS

{specification_code}

def equation_v0({signature_vars}, params):
    # Lower score version
    {return_var} = {example_formula_0}
    return {return_var}

def equation_v1({signature_vars}, params):
    # Medium score version
    {return_var} = {example_formula_1}
    return {return_var}

def equation_v2({signature_vars}, params):
    '''Improved version of `equation_v1`.'''
    # (Implementation to be generated)
\end{lstlisting}
\end{promptbox}

\newpage
\subsection{Residual Analysis Prompt}
\begin{promptbox}{Residual Analysis Prompt}
\textbf{[Task: Analyze Residual Patterns]}

\textbf{Role}: You are an expert data analyst specializing in symbolic regression diagnostics for \textbf{\{domain\}}.

\textbf{Goal}: Analyze the residual statistics and identify key patterns that explain why the current equation fails.

\vspace{0.3em}
\textbf{[Variable Definitions]}

\{variable\_definitions\}

\vspace{0.3em}
\textbf{[Current Equation]}

\begin{lstlisting}[basicstyle=\ttfamily]
{code_str}
\end{lstlisting}

\vspace{0.3em}
\textbf{[Raw Residual Statistics]}

\{raw\_statistics\}

\vspace{0.3em}
\textbf{[Analysis Instructions]}

Based on the statistics above, provide a concise analysis that:
\begin{itemize}[leftmargin=*, itemsep=0pt, topsep=2pt]
\item Identifies the PRIMARY deficiency of the current equation (what mathematical structure is missing or incorrect?)
\item Explains which input regions or variable ranges are most problematic
\item Analyze and summarize how changes of each independent variable influence the dependent variable, and the possible intrinsic relationships among independent variables.
\item Suggests what type of mathematical modifications would address these issues (e.g., nonlinear terms, interactions, saturation effects)
\end{itemize}

Be specific and actionable. Focus on structural insights rather than parameter tuning.

\vspace{0.3em}
\textbf{[Output Format]}

Provide your analysis briefly in the following JSON format:

\begin{lstlisting}[basicstyle=\ttfamily\small]
{
  "primary_deficiency": "Description of the main structural 
                         problem with the current equation",
  "problematic_regions": "Description of which input regions 
                          have the highest errors and why",
  "variable_relationships": "Analysis of how each independent 
                             variable ({input_vars}) influences 
                             the dependent variable ({output_var}) 
                             and their intrinsic relationships. 
                             Hint: analyze the functional relationship 
                             between each input variable and the output 
                             in different intervals",
  "suggested_modifications": "Specific mathematical structures or 
                              terms that should be added/modified"
}
\end{lstlisting}
\end{promptbox}

\newpage
\subsection{Refinement Prompt}

\begin{promptbox}{Refine User Prompt}
\textbf{[Task: Residual-Guided Equation Refinement]}

\textbf{Role}: You are a helpful assistant tasked with refining mathematical function structures for \textbf{\{domain\}}.

\textbf{Goal}: Based on the Residual Diagnostic Report, produce \textbf{EXACTLY \{refine\_n\}} refined versions of the current best equation.

\vspace{0.3em}
\textbf{[Variable Definitions]}

\{variable\_definitions\}

\vspace{0.3em}
\textbf{[Current Best Equation]}

\begin{lstlisting}[basicstyle=\ttfamily]
{code_str}
\end{lstlisting}

\vspace{0.3em}
\textbf{[Experience Hints]}

\{experience\_hints\}

(Optional: Retrieved from experience buffer based on similar equations.)

\vspace{0.3em}
\textbf{[Residual Diagnostic Report]}

\{analysis\_report\}

\vspace{0.3em}
\textbf{[Refinement Instructions]}

Based on the analysis above:
\begin{itemize}[leftmargin=*, itemsep=0pt, topsep=2pt]
\item Focus on addressing the PRIMARY DEFICIENCY identified in the report
\item Target the PROBLEMATIC REGIONS where the current equation fails most
\item Consider the VARIABLE RELATIONSHIPS to understand how each input affects the output in different regimes
\item Implement the SUGGESTED MODIFICATIONS to fix structural issues
\item Pay attention to \textbf{\{defect\_var\}}, which contributes most to prediction error
\item If the analysis suggests fundamental structural limitations, do not hesitate to explore completely different formulations
\end{itemize}

\vspace{0.3em}
\textbf{[Hard Rules]}

\{hard\_rules\_block\}

\vspace{0.3em}
\textbf{[Output Format]}

\begin{lstlisting}[basicstyle=\ttfamily]
import numpy as np

def equation_v1({signature_vars}, params):
    # Reasoning: ...
    {return_var} = ...
    return {return_var}
\end{lstlisting}
\end{promptbox}

\newpage
\subsection{Improvement Analysis Prompt}
\begin{promptbox}{Improvement Analysis Prompt}

\textbf{[Task: Analyze Equation Improvement]}

\textbf{Role}: You are an expert scientist analyzing mathematical equation improvements in \textbf{\{domain\}}.

\textbf{Goal}: Understand why a refined equation outperformed the original and extract actionable insights.

\vspace{0.3em}
\textbf{[Score Explanation]}
\{score\_explanation\}

\vspace{0.3em}
\textbf{[Variable Definitions]}
\{variable\_definitions\}

\vspace{0.3em}
\textbf{[Original Equation]} (score: \{original\_score\})
\begin{lstlisting}[basicstyle=\ttfamily]
{original_equation}
\end{lstlisting}

\vspace{0.3em}
\textbf{[Analysis of Residual Errors in the Original Equation]}
\{residual\_summary\}

(Note: These are the residual patterns/errors of the Original Equation that need fixing.)

\vspace{0.3em}
\textbf{[Improved Equation]} (score: \{improved\_score\})
\begin{lstlisting}[basicstyle=\ttfamily]
{improved_equation}
\end{lstlisting}

\vspace{0.3em}
\textbf{[Score Improvement]}: \{score\_gain\} (\{score\_gain\_percent\}\%)

\vspace{0.3em}
\textbf{[Analysis Instructions]}
\begin{enumerate}[leftmargin=*, itemsep=0pt, topsep=2pt]
\item What structural changes were made? (new terms, different function forms, interactions, etc.)
\item Explain how the structural changes corrected the specific patterns observed in the original equation's residuals.
\item Provide an actionable insight that could help improve similar equations.
\item Start your response with a \texttt{<thinking>} block to analyze the mathematical structural changes step-by-step.
\end{enumerate}

\vspace{0.3em}
\textbf{[Output Format]}

Your response must strictly follow this structure:

\textbf{PART 1: Thinking Process}

Wrap your step-by-step analysis inside a \texttt{<thinking>} tag.

Example: 
\begin{lstlisting}[basicstyle=\ttfamily]
<thinking> 
The original equation lacked... 
The new term interaction(x, y) captures... 
</thinking>
\end{lstlisting}

\textbf{PART 2: Structured JSON}

Provide the final result in a standard JSON block:
\begin{lstlisting}[basicstyle=\ttfamily]
{
  "insight": "Briefly describe the mathematical changes, 
  explain why they are effective in the domain context, 
  and conclude with one actionable takeaway for future 
  optimization."
}
\end{lstlisting}

\end{promptbox}

\newpage
\subsection{Island Reflection Prompt}
\begin{promptbox}{Island Reflection Prompt}
\textbf{[Task: Reflection Analysis]}

\textbf{Role}: You are an expert in scientific equation discovery and mathematical modeling.

\textbf{Domain}: \{domain\}

\vspace{0.3em}
\textbf{[Variable Definitions]}
\{variable\_definitions\}

\vspace{0.3em}
You are analyzing \{total\_equations\} equation candidates generated during a refinement process for the \{problem\_name\} problem.

\vspace{0.3em}
\textbf{[Original Equation (to be refined)]}
Score: \{baseline\_score\} (higher is better, score = -MSE)
\begin{lstlisting}[basicstyle=\ttfamily]
{original_equation}
\end{lstlisting}

\vspace{0.3em}
Below are the refined candidates. Each is labeled with a status showing whether it improved the score and whether it passed constraint checks:
\begin{itemize}[leftmargin=*, itemsep=0pt, topsep=2pt]
\item Improved/NotImproved: compared to baseline score
\item Valid/Invalid: constraint check results
\end{itemize}

\vspace{0.3em}
\textbf{[Refined Candidates Results]}
\begin{lstlisting}[basicstyle=\ttfamily]
{detailed_results}
\end{lstlisting}

\vspace{0.3em}
\textbf{[Task]}: Analyze these results and provide structured guidance for future evolution.
Compare the refined candidates against the original equation above.
You need to make your answer as concise as possible.

\vspace{0.3em}
Please provide your analysis in the following format:

\vspace{0.3em}
\textbf{What Worked Well}

[List 1-3 structural patterns or features that led to success. Please summarize useful experience.]
\begin{itemize}[leftmargin=*, itemsep=0pt, topsep=2pt]
\item Pattern 1: [describe what worked and why]
\item Pattern 2: [describe what worked and why]
\item Pattern 3: [describe what worked and why]
\end{itemize}

\vspace{0.3em}
\textbf{What Didn't Work}

[Analyze failures in two categories:]

\textit{Constraint Violations (Invalid)}

[List patterns that violated domain constraints. Please summarize the lessons you can draw from it.]
\begin{itemize}[leftmargin=*, itemsep=0pt, topsep=2pt]
\item Pattern 1: [describe what structural issue caused constraint violation and why]
\end{itemize}

\textit{Score Not Improved (NotImproved but Valid)}

[List patterns that passed constraints but didn't improve score. Please summarize what lessons can you draw from it.]
\begin{itemize}[leftmargin=*, itemsep=0pt, topsep=2pt]
\item Pattern 1: [describe what structural limitation prevented score improvement and why]
\item Pattern 2: [describe what structural limitation prevented score improvement and why]
\end{itemize}

\vspace{0.3em}
\textbf{Recommendations for Future Evolution}

[Provide 1-3 specific, actionable recommendations]
\begin{itemize}[leftmargin=*, itemsep=0pt, topsep=2pt]
\item Recommendation 1: [what to try or avoid]
\item Recommendation 2: [what to try or avoid]
\item Recommendation 3: [what to try or avoid]
\end{itemize}

Focus on general structural patterns, not specific parameter values. Be specific and actionable.

\end{promptbox}

\newpage
\subsection{Repair Prompt}

\begin{promptbox}{Repair User Prompt}
\textbf{[Task: Repair Refined Equation Based on Prior Constraint Violation]}

\textbf{Role}: You are a repair-oriented assistant tasked with fixing a failed mathematical equation structure that violates known constraints in the \textbf{\{domain\}}.

\textbf{Goal}: Using the Prior Constraints as the primary guide and the Residual Diagnostic Report as supporting evidence, produce \textbf{EXACTLY \{refine\_n\}} repaired variant(s) of the current best equation by modifying its structure to directly fix the identified failure.

\vspace{0.3em}
\textbf{[Variable Definitions]}
\{variable\_definitions\}

\vspace{0.3em}
\textbf{[Current Best Equation]}
\begin{lstlisting}[basicstyle=\ttfamily]
{code_str}
\end{lstlisting}

\vspace{0.3em}
\textbf{[Residual Diagnostic Report]}
\{analysis\_report\}

\vspace{0.3em}
\textbf{[Prior Constraints]}
\{physics\_block\}

Use the Prior Constraints as the primary reference to understand why the refined equation failed.
Use the Residual Diagnostic Report only as supporting evidence to identify which terms or variable dependencies may have caused the violation.
Focus your analysis on residual patterns most directly related to the failure reason, especially those involving \{defect\_var\}.

\vspace{0.3em}
\textbf{[Good Examples]}
\{good\_examples\_block\}

\vspace{0.3em}
\textbf{[Original Best Equation]}
\begin{lstlisting}[basicstyle=\ttfamily]
{original_equation}
\end{lstlisting}

\vspace{0.3em}
\textbf{[Failed Refined Equation]}
\begin{lstlisting}[basicstyle=\ttfamily]
{failed_equation}
\end{lstlisting}

\vspace{0.3em}
\textbf{[Failure Reason]}
\{failure\_reason\}

\vspace{0.3em}
\textbf{[Suggested Fix]}
\{fix\_hint\_block\}

\vspace{0.3em}
\textbf{[Previous Failed Attempts]}
\{repair\_history\_block\}

\vspace{0.3em}
\textbf{[Repair Instructions]}
\begin{enumerate}[leftmargin=*, itemsep=0pt, topsep=2pt]
\item Analyze WHY the equation failed based on the failure reason above and the prior constraints.
\item \{history\_instruction\}
\item Use no more than 10 parameters.
\end{enumerate}

\vspace{0.3em}
\textbf{[Output Format]}
\begin{lstlisting}[basicstyle=\ttfamily]
def equation({signature_vars}, params):
    # Reasoning: [explain how you fixed the prior violation]
    {return_var} = [your repaired formula]
    return {return_var}
\end{lstlisting}

\end{promptbox}

\newpage

\newpage
\section{Equations Discovered by PG-SR}
\label{app:discovered_equations}

\subsection{E. coli Growth: Equations Discovered by Our Method}

For the E. coli Growth dataset, the ground-truth equation discovered by PG-SR is:

\begin{equation}
\frac{dB}{dt} = \mu_{\max} B \left(\frac{S}{K_s + S}\right) \frac{\tanh\!\big(k(T - x_0)\big)}{1 + c(T - x_{\mathrm{decay}})^4} \exp\!\left(-\lvert \mathrm{pH} - \mathrm{pH}_{\mathrm{opt}} \rvert\right) \sin^2\!\left(\frac{(\mathrm{pH} - \mathrm{pH}_{\min})\pi}{\mathrm{pH}_{\max} - \mathrm{pH}_{\min}}\right),
\end{equation}

where $B$ is the bacterial population, $S$ the substrate concentration, $T$ the temperature, and $pH$ the acidity level.

Using our method, PG-SR initially discovered:
\begin{equation}
\frac{dB}{dt}
=
\theta\cdot
B \,
\frac{S}{S + K}
\cdot
\exp\!\left(-\frac{(37 - T)^2}{2\sigma_1^2}\right)
\cdot
\exp\!\left(-\frac{T - 37}{\sigma_2}\right)
\cdot
\left(1 + \frac{\max(T-35,\,0)}{\eta}\right)
\cdot
\left(1 - \frac{|\mathrm{pH}-7|}{\sigma_p}\right)^{\alpha}
\cdot
\left(\frac{B^2}{(B + \kappa)^2}\right)
,
\end{equation}

with parameters optimized as:
\begin{equation}
\begin{aligned}
K &= 0.99, \qquad
\eta = 4.06, \qquad
\sigma_1 = 2.69, \qquad
\sigma_2 = 2.72, \\
\sigma_p &= 6.42, \qquad
\alpha = 6.50, \qquad
\kappa = 7.15 \times 10^{-3}, \qquad
\theta = 0.31 .
\end{aligned}
\end{equation}

Structurally, the equation discovered by PG-SR follows a multiplicative growth formulation, integrating biomass-dependent growth, substrate limitation, temperature modulation, and pH-dependent inhibition, which is consistent with the ground-truth equation.
The temperature effect is modeled through an asymmetric response around a reference temperature, with an additional temperature-dependent amplification term introduced at elevated temperatures, again in agreement with the formulation of the ground-truth model.
The pH-dependent factor introduces a nonlinear inhibitory effect centered around neutral conditions, which is likewise consistent with the ground-truth equation.
In addition, the biomass-related term incorporates a nonlinear saturation effect to modulate growth at higher population densities.
Overall, the discovered equation is highly consistent with the ground-truth model in terms of the underlying growth mechanisms and key functional components, while differing in the specific functional forms employed.

\subsection{Stressstrain: Equations Discovered by Our Method}

For the Stressstrain dataset, no ground-truth equation is available. PG-SR discovered:
\begin{equation}
\sigma(\varepsilon, T)
=
\mathbb{I}\!\left(\varepsilon \le \frac{\sigma_y(\varepsilon,T)}{E_0}\right)\, (E_0\,\varepsilon)
+
\mathbb{I}\!\left(\varepsilon > \frac{\sigma_y(\varepsilon,T)}{E_0}\right)
\left[
\sigma_y(\varepsilon,T)\Big(1 + h\,(\varepsilon - \tfrac{\sigma_y(\varepsilon,T)}{E_0})\Big)
-
\eta\,(\varepsilon - \tfrac{\sigma_y(\varepsilon,T)}{E_0})^2
\right],
\end{equation}

\begin{equation}
\sigma_y(\varepsilon,T)
=
\max\!\left(
\sigma_{y0}\Big(1 - (a_1 T + a_2 T^2 + a_3 T^3 + a_4\,\varepsilon T)\Big),
\ \sigma_{\mathrm{sat}}
\right).
\end{equation}

with parameters optimized as:
\begin{equation}
\begin{aligned}
E_0 &= 36.468489, \qquad
\sigma_{y0} = 0.833684, \qquad
a_1 = 0.000000, \qquad
a_2 = 0.415737, \\
a_3 &= 0.387977, \qquad
a_4 = 0.101623, \qquad
\sigma_{\mathrm{sat}} = 0.255751, \\
\eta &= 0.443807, \qquad
h = 0.397738.
\end{aligned}
\end{equation}
The discovered constitutive relation exhibits a clear elastic–plastic structure with temperature-dependent yielding. The elastic regime follows a linear Hookean response, while the plastic regime is governed by a competition between linear hardening and quadratic softening. The yield stress decreases nonlinearly with temperature and is bounded from below by a saturation stress, ensuring physical consistency at high temperatures. Notably, the model identifies a coupling term between strain and temperature in the yield function, indicating that thermal softening effects intensify at larger strains. Overall, the discovered equation aligns well with established physical understanding of thermomechanical material behavior and is broadly consistent with the known constitutive characteristics of Al 6061-T651, despite the absence of ground-truth governing equations.

\subsection{CRK: Equations Discovered by Our Method}

For the concentration dataset, the ground-truth governing equation is
\begin{equation}
\frac{dA}{dt}
= -0.1899\,A^2
+ \frac{0.4598\,A^2}{0.7498\,A^4 + 1},
\end{equation}
where $A$ denotes the substance concentration.

PG-SR initially discovered the following highly flexible expression:
\begin{equation}
\frac{dA}{dt}
=
A^{\frac{p_0}{
\big( (A^A)^A - p_1 \big)^{
\big( (A^{-p_2} + p_3)^{(A + p_4)} \big)
}}}
\left(
- p_5 \exp(A) + p_6
\right).
\end{equation}

The optimized parameters (all positive) are:
\[
p_0 = 0.9216, \quad
p_1 = 0.1439, \quad
p_2 = 0.8598, \quad
p_3 = 0.1417, \quad
p_4 = 0.0136, \quad
p_5 = 0.1417, \quad
p_6 = 0.4579.
\]

Although the equation skeleton discovered by PG-SR differs substantially from the ground-truth equation in its structure, it nevertheless represents a highly expressive functional form that not only fits the observed data effectively, but also reproduces the characteristic dynamics in the vicinity of equilibrium points, thereby satisfying the key steady-state properties and local dynamical consistency constraints observed in the system.

\subsection{Oscillator1: Equations Discovered by Our Method}

For the \texttt{oscillator1} dataset, the ground-truth governing equation of the damped nonlinear oscillator is
\begin{equation}
\dot{v}
= 0.8\,\sin(x)
- 0.5\,v^3
- 0.2\,x^3
- 0.5\,x v
- x\cos(x),
\end{equation}
where $x$ denotes the position and $v$ denotes the velocity.

PG-SR initially discovered the following candidate equation:
\begin{equation}
\begin{aligned}
\ddot{x} ={}&
- c_1 v
- c_2 \sin(x)
+ c_3 x^3
+ c_4 v
- c_5 v^2
- c_6 x v \\
&- c_7 x^5
- c_8 v^3
+ c_9
- c_{10} \sin(2x).
\end{aligned}
\end{equation}

The optimized parameters (all positive) are:
\begin{equation}
\begin{aligned}
c_1 &= 0.5503, \quad
c_2 = 0.0954, \quad
c_3 = 0.0810, \quad
c_4 = 0.5503, \\
c_5 &= 1.68 \times 10^{-7}, \quad
c_6 = 0.5000, \quad
c_7 = 0.0203, \\
c_8 &= 0.5000, \quad
c_9 = 5.49 \times 10^{-9}, \quad
c_{10} = 0.0523.
\end{aligned}
\end{equation}

Negligible terms were removed, including the canceling linear damping terms $-c_1 v + c_4 v$, as well as small-magnitude terms $c_5$, $c_7$, $c_9$, and $c_{10}$.
This yields the simplified PG-SR equation:
\begin{equation}
\dot{v} \approx
-0.0954\,\sin(x)
+ 0.0810\,x^3
- 0.5000\,x v
- 0.5000\,v^3.
\end{equation}

The simplified PG-SR equation successfully reproduces the dominant nonlinear oscillator dynamics by capturing cubic nonlinearities in both position and velocity, the interaction between position and velocity, and sinusoidal driving. Although the recovered coefficients differ slightly from the ground truth, particularly for the $\sin(x)$ and $x^3$ terms, the discovered equation correctly identifies all key functional structures present in the true system.
\newpage
\subsection{Oscillator2: Equations Discovered by Our Method}

For the \texttt{oscillator2} dataset, the ground-truth governing equation is given by
\begin{equation}
\dot{v}
= 0.3\,\sin(t)
- 0.5\,v^3
- x v
- 5.0\,x \exp(0.5x).
\end{equation}

PG-SR initially discovered the following candidate equation:
\begin{equation}
\begin{aligned}
\dot{v} ={}& A_1 \sin(t)
+ A_2 \cos(t)
- k_1 x e^{0.5x}
- b_1 v
- b_2 v^3 \\
&- \alpha x v
- \beta v^2 x
- \delta |x| v
- \gamma v x^2
- \gamma x^2 .
\end{aligned}
\end{equation}

The optimized parameters are:
\[
\begin{aligned}
& k_1 = 5.00001, \quad
b_1 = 0.0, \quad
b_2 = 0.500037, \quad
A_1 = 0.299999, \quad
A_2 = 0.0, \\
& \alpha = 1.000045, \quad
\beta = 0.0, \quad
\delta = 0.0, \quad
\gamma = 0.0 .
\end{aligned}
\]

After removing negligible terms ($b_1$, $A_2$, $\beta$, $\delta$, and $\gamma$), the simplified PG-SR equation becomes
\begin{equation}
\dot{v} \approx
0.3\,\sin(t)
- 0.5\,v^3
- x v
- 5.0\,x \exp(0.5x).
\end{equation}

A direct comparison with the ground-truth equation shows that the discovered equation exactly matches the true governing equation. All principal components, including the driving term $\sin(t)$, the restoring force $x e^{0.5x}$, the cubic damping term $v^3$, and the interaction term $xv$, are recovered with identical coefficients.
These results demonstrate that PG-SR fully achieves scientific discovery by identifying the underlying functional structure governing the system dynamics.

\section{Computational Resources and Inference Cost}
\label{app:compute_cost}

According to the experimental setup, we use gpt-4o-mini as the backbone model. Each PG-SR experiment is configured with a global budget of 10,000 samples. All experiments were conducted on a local workstation equipped with an Intel Core i9-14900HX CPU and an NVIDIA GeForce RTX 4090 GPU.
Under this configuration, a complete run on a standard benchmark (e.g., Oscillator1) takes approximately 10 hours. The total token usage per run is approximately 13.36 million input tokens and 4.49 million output tokens, totaling roughly 17.85 million tokens. Based on standard commercial API pricing, the estimated cost per run is approximately \$4.70 USD.
Although this approach incurs higher time and monetary costs compared to traditional search-based approaches, it remains highly accessible for academic research without requiring expensive high-performance computing clusters. More importantly, given the substantial gains in scientific consistency and OOD generalization (as shown in Table~\ref{tab:sr_baseline_comparison}), this cost is negligible.

\end{document}